\newtheorem{dfn}{Definition}
\newtheorem{thm}{Theorem}
\newtheorem{lem}{Lemma}
\newtheorem{rmk}{Remark}
\newtheorem{prop}{Proposition}
\newcommand{\Real}{\mathbb{R}}
\newcommand{\Set}[1]{\left\{ #1 \right\} }
\newcommand{\bE}{\mathbb{E}}
\newcommand{\Exp}[2]{\bE_{#1} \left[ #2 \right]}
\newcommand{\KLDiv}[2]{D_{\mathrm{KL}} \left[ #1 \| #2 \right] }
\newcommand{\Normal}[2]{\mathcal{N} \left( #1, #2 \right)}
\newcommand{\NormalPDF}[3]{\mathcal{N} \left( #1 ; #2, #3 \right)}
\newcommand{\vecZero}{\bm{0}}
\newcommand{\IdentityMatrix}{\bm{I}}
\newcommand{\Tr}[1]{\mathrm{Tr}#1}
\newcommand{\quadForm}[2]{#2^{\top} #1 #2}
\newcommand{\determinant}[1]{\left| #1 \right|}
\newcommand{\bx}{\bm{x}}
\newcommand{\bz}{\bm{z}}
\newcommand{\paramEnc}{\bm{\phi}}
\newcommand{\paramDec}{\bm{\theta}}
\newcommand{\originalZ}{\tilde{\bm{z}}}
\newcommand{\firstZ}{\bm{z}}
\newcommand{\secondZ}{\bm{z}^{\prime}}
\newcommand{\pairZ}{\bar{\bm{z}}}
\newcommand{\originalX}{\tilde{\bm{x}}}
\newcommand{\firstX}{\bm{x}}
\newcommand{\secondX}{\bm{x}^{\prime}}
\newcommand{\pairX}{\bar{\bm{x}}}
\newcommand{\dx}[1]{\mathrm{d} #1}
\newcommand{\vecX}{\bm{x}}
\newcommand{\muC}{\bm{\mu}_{c}}
\newcommand{\mupZ}{\bm{\mu}_{\pairZ}}
\newcommand{\sigmaC}{\bm{\Sigma}_{c}}
\newcommand{\sigmapZ}{\bm{\Sigma}_{\pairZ}}
\newcommand{\gmmCoef}[1]{\pi_{#1}}
\newcommand{\gmmMean}[1]{\bm{\mu}_{#1}}
\newcommand{\gmmVar}[1]{\bm{\Sigma}_{#1}}
\newcommand{\paramGMM}{\bm{\psi}}
\newcommand{\meanEnc}[1]{\bm{\mu}_{#1}}
\newcommand{\varEnc}[1]{\bm{\Sigma}_{ #1}}
\newcommand{\bmu}{\bm{\mu}}
\newcommand{\bSigma}{\bm{\Sigma}}
\newcommand{\muA}{\bm{\mu}_{a}}
\newcommand{\muB}{\bm{\mu}_{b}}
\newcommand{\sigmaA}{\bm{\Sigma}_{a}}
\newcommand{\sigmaB}{\bm{\Sigma}_{b}}
\newcommand{\augVar}{\bm{\Sigma}_{\mathrm{aug}}}
\title{
Robust VAEs via Generating Process of Noise Augmented Data
%Robust VAE via Generating Process of Augmented Data by Random Noise
}
\author[1]{Hiroo Irobe\footnote{Equally contributed}}
\author[1]{Wataru Aoki$^\ast$}
\author[2]{Kimihiro Yamazaki}
\author[1]{Yuhui Zhang}
\author[1,3]{Takumi Nakagawa}
\author[1]{Hiroki Waida}
\author[2,3]{Yuichiro Wada}
\author[1,3]{Takafumi Kanamori\footnote{Corresponding author; E-mail: \texttt{kanamori@c.titech.ac.jp}}}
\affil[1]{\normalsize
            Department of Mathematical and Computing Science, 
            Tokyo Institute of Technology,
            2-12-1 Ookayama, Meguro-ku, 
            Tokyo,
            152-8550, 
            Japan
            }
\affil[2]{\normalsize
            Fujitsu Limited,
            4-1-1 Kamikodanaka, Nakahara-ku, Kawasaki-shi, 
            Kanagawa,
            211-8588, 
            Japan
            }
\affil[3]{\normalsize
            RIKEN Center for Advanced Intelligence Project,
            Nihonbashi 1-chome Mitsui Building, 15th floor, 1-4-1 Nihonbashi, Chuo-ku, 
            Tokyo,
            103-0027, 
            Japan
            }
\date{}
\begin{document}
\maketitle

\begin{abstract}
 Advancing defensive mechanisms against adversarial attacks in generative models is a critical research topic in machine learning. Our study focuses on a specific type of generative models - Variational Auto-Encoders (VAEs). Contrary to common beliefs and existing literature which suggest that noise injection towards training data can make models more robust, our preliminary experiments revealed that naive usage of noise augmentation technique did not substantially improve VAE robustness. In fact, it even degraded the quality of learned representations, making VAEs more susceptible to adversarial perturbations. This paper introduces a novel framework that enhances robustness by regularizing the latent space divergence between original and noise-augmented data. Through incorporating a paired probabilistic prior into the standard variational lower bound, our method significantly boosts defense against adversarial attacks. Our empirical evaluations demonstrate that this approach, termed Robust Augmented Variational Auto-ENcoder (RAVEN), yields superior performance in resisting adversarial inputs on widely-recognized benchmark datasets.
\end{abstract}

\section{Introduction}
\label{sec:introduction}
Generative models, such as Variational Auto-Encoders (VAEs)~\cite{DBLP:journals/corr/KingmaW13}, generative adversarial networks~\cite{NIPS2014_5ca3e9b1}, normalizing flows~\cite{pmlr-v37-rezende15}, diffusion models~\cite{DBLP:conf/iclr/0011SKKEP21}, and autoregressive models~\cite{pmlr-v48-oord16}, have garnered significant attention in machine learning research. These models are capable of generating novel content, including text~\cite{touvron2023llama}, images~\cite{DBLP:journals/corr/KingmaW13}, and videos~\cite{singer2023makeavideo}. In the case of VAEs, generation is achieved through an auto-encoder, with the training objective being the maximization of a variational lower bound~\cite{DBLP:journals/corr/KingmaW13}, hence its name. Recently, this methodology has been applied in drug discovery, particularly using a VAE-based model, cryoDRGN~\cite{zhong2021cryodrgn}, for reconstructing continuous distributions of 3D protein structures from 2D images collected by electron microscopes.

Existing research~\cite{Cemgil2020Adversarially} and~\cite{NEURIPS2022_39e9c591} show that the original VAE~\cite{DBLP:journals/corr/KingmaW13} struggles with reconstruction and downstream classification under adversarial inputs. Various methods~\cite{Cemgil2020Adversarially, pmlr-v108-eduardo20a, NEURIPS2022_39e9c591, willetts2021improving, NEURIPS2020_ac10ff19} have been proposed to address these issues. In~\cite{Cemgil2020Adversarially}, it was observed that the original VAE lacked robustness against inputs outside the empirical data distribution. Subsequently, they formulated a modified variational lower-bound that incorporates these inputs, aiming to construct a more robust VAE. In downstream classification tasks, their method showed remarkable efficacy. 
In study~\cite{NEURIPS2022_39e9c591}, a Markov chain Monte Carlo sampling approach was employed during the inference steps to counteract adversarial inputs, by shifting the latent representation of the attacks back to more probable regions within the posterior distribution. This technique~\cite{NEURIPS2022_39e9c591} effectively improved the model's robustness.

Besides the above successful approaches, another strategy, presented in~\cite{NEURIPS2019_335cd1b9}, involves using additive random noise $\bm{\epsilon}$ on original data $\originalX$ to fortify classifiers against adversarial inputs in supervised learning, as supported by theoretical and experimental evidence. Influenced by these findings, we conducted preliminary experiments on the MNIST dataset~\cite{lecun1998gradient}, comparing the performance of the following two VAE configurations in downstream classification tasks~\cite{Cemgil2020Adversarially}:
\begin{description}
    \item[(A)] a VAE trained exclusively on original dataset $\{\originalX_i\}$,
    \item[(B)] a VAE trained on both original $\{\originalX_i\}$ and corresponding noisy dataset $\{\firstX_i\}$, where $\firstX_i = \originalX_i + \bm{\epsilon_i}$, and $\bm{\epsilon_i}$ is random Gaussian noise: $\bm{\epsilon_i} \sim \Normal{\vecZero}{0.05^2 \IdentityMatrix}$.
\end{description}
We assessed performance using the encoder's linear classification accuracy against adversarial inputs~\cite{Cemgil2020Adversarially}. However, the results showed no significant improvement in classification accuracies, suggesting that the
naive usage of the noisy data $\firstX$ in (B) was insufficient to enhance the original VAE's robustness. 
\begin{figure}[htbp]
\centering
\includegraphics[scale=0.5]{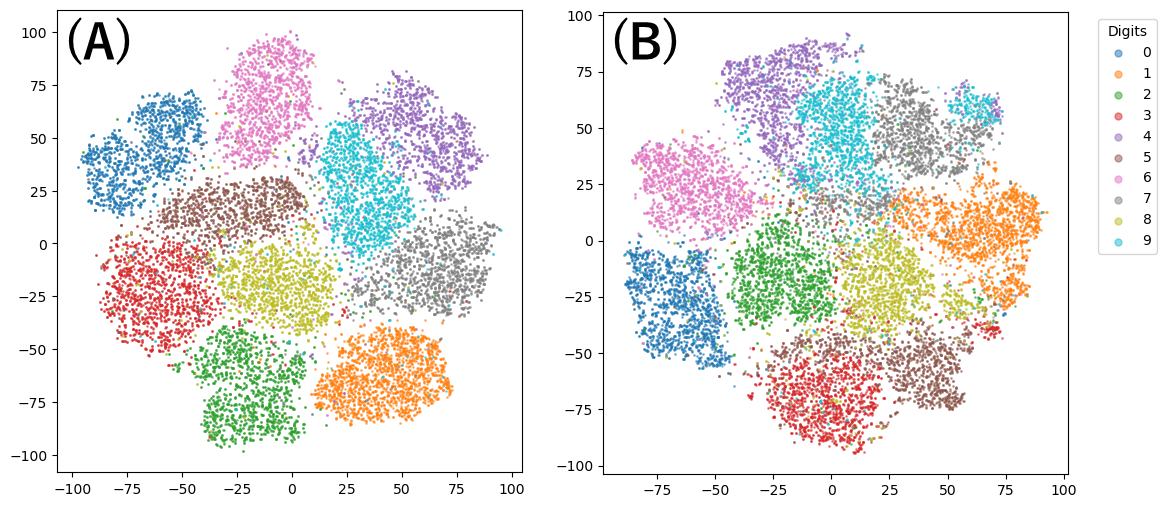}
\caption{
t-SNE visualization of latent variables by trained VAE on test MNIST: (A) trained on original MNIST training data $\{\originalX_i\}$, (B) trained on both original and noisy data $\{\originalX_i\} \cup \{\firstX_i\}$, where $\firstX_i = \originalX_i + \bm{\epsilon}_i$. Colors denote class labels on the top-right. 
}
\label{fig: visualization of latent via vae with noisy data}
\end{figure}
To understand the cause of the failure, we analyzed the latent variables $\originalZ$ and $\firstZ$, outputs of the encoder on inputs $\originalX$ and $\firstX$ respectively; the t-SNE visualization~\cite{vanDerMaaten2008} of Figure~\ref{fig: visualization of latent via vae with noisy data} also shows that the representations of (B) are similar to (A). 
We then discovered unexpected behavior in the VAE under condition (B): the average squared Euclidean distance between $\originalZ$ and $\firstZ$ was larger compared to (A), despite the inclusion of noisy data in (B)'s training dataset; see the first and second columns in Table~\ref{tab: verification of our hypothesis}.
See further details of our preliminary experiments in Appendix~\ref{apped: details preliminary experiments}.

We \emph{hypothesize} that this failure partly stems from the distant representation of the pair $(\originalZ,\firstZ)$, which hampers the enhancement of robustness. In this study, we aim to construct a robust VAE by 
bringing 
the variables $\originalZ$ and $\firstZ$ closer together in the latent space. To achieve this, we integrated a probabilistic model of the paired variables~\cite{haochen2021provable} into the prior of the original variational lower-bound~\cite{DBLP:journals/corr/KingmaW13}. This  model illustrates the generation mechanism of the paired variables from original data. 

Our three key contributions are summarized below:
\begin{description}
    \item[1)] We derived a novel variational lower-bound by incorporating the generating process for a robust VAE. The proposed bound is computationally efficient due to its closed-form structure, and its simplicity allows seamless integration with existing VAE extensions like $\beta$-VAE~\cite{higgins2017betavae} and VaDE~\cite{ijcai2017p273}, as well as methods like~\cite{NEURIPS2022_39e9c591} and~\cite{NEURIPS2020_ac10ff19} to further enhance the robustness. We name our method the Robust Augmented Variational auto-ENcoder (\emph{RAVEN}).

    \item[2)] Our analysis of this bound reveals that it is a generalization of the original bound~\cite{DBLP:journals/corr/KingmaW13}, and provides an intuitive understanding of how it effectively narrows the distance between the paired representational variables $(\originalZ,\firstZ)$.

    \item[3)] We empirically prove that our hypothesis is correct by results of our numerical experiments: RAVEN shows superior resilience against various adversarial attacks across common datasets without compromising on reconstruction quality. The superior performance represents a significant advancement in the field, enhancing both the robustness and versatility of VAEs.
\end{description}
% 1) We derived a novel variational lower-bound by incorporating the generating process for a robust VAE. The proposed bound is computationally efficient due to its closed-form structure, and its simplicity allows seamless integration with existing VAE extensions like $\beta$-VAE~\cite{higgins2017betavae} and VaDE~\cite{ijcai2017p273}, as well as methods like~\cite{NEURIPS2022_39e9c591} and~\cite{NEURIPS2020_ac10ff19} to further enhance the robustness. We name our method the Robust Augmented Variational auto-ENcoder (\emph{RAVEN}). 2) Our analysis of this bound reveals that it is a generalization of the original bound~\cite{DBLP:journals/corr/KingmaW13}, and provides an intuitive understanding of how it effectively narrows the distance between the paired representational variables $(\originalZ,\firstZ)$. 3) We empirically prove that our hypothesis is correct by results of our numerical experiments: RAVEN shows superior resilience against various adversarial attacks across common datasets without compromising on reconstruction quality. The superior performance represents a significant advancement in the field, enhancing both the robustness and versatility of VAEs.

\begin{table}[!t]
\captionsetup[table]{textfont=normalfont}
\caption{
Mean $\pm$ std on $\{\|\originalZ_i - \firstZ_i\|_2^2\}_{i=1}^{10^4}$, where $\originalZ$ (resp. $\firstZ$) is representation of original MNIST test data $\originalX$ (resp. the corresponding noisy data $\firstX = \originalX + \bm{\epsilon}$), obtained by an encoder. "VAE" and "VAE (noise)" correspond to (A) and (B).
}
\centering
\scalebox{1}{
\begin{tabular}{ccc}
    \toprule
     VAE & VAE (noise) & Ours  \\
    \midrule
    $0.89 \pm 1.29$ & $3.24 \pm 5.12$ & $0.53 \pm 0.36$ \\
    \bottomrule
\end{tabular}
}
\label{tab: verification of our hypothesis}
\end{table}

\section{Related Work}
\label{sec:related work}
In Section~\ref{subsec:vae}, we review details of the original VAE~\cite{DBLP:journals/corr/KingmaW13}, which serves as a foundational method for our study.
Subsequently, in Section~\ref{subsec:other methods}, we briefly review existing methods to build a robust VAE~\cite{willetts2021improving} and~\cite{NEURIPS2020_ac10ff19}, as introduced at the second paragraph of Section~\ref{sec:introduction}. At last, in Section~\ref{subsec:se}, we describe details of the Smooth Encoder (SE)~\cite{Cemgil2020Adversarially}, given its relative similarity to our proposed method. The distinctions between this method and our own to ours are clarified at the conclusion of this section.

In the following, for a random variable $\bz$,
let $\NormalPDF{\bz}{\bmu}{\bSigma}$ denote a multivariate Gaussian distribution with mean $\bmu$ and variance matrix $\bSigma$: 
\begin{equation}
    \label{eq: multivariate normal distribution}
    \NormalPDF{\bz}{\bmu}{\bSigma} = \frac{\exp \left( -\frac{1}{2} (\bz - \bmu)^\top \bSigma^{-1} (\bz - \bmu) \right)}{\sqrt{\determinant{2 \pi \bSigma}}}, 
\end{equation}
and we use notation $\bz \sim \Normal{\bmu}{\bSigma}$.

\subsection{Variational Auto-Encoder}
\label{subsec:vae}
The VAE stands out as a widely recognized unsupervised generative model, based on an auto-encoder architecture. The auto-encoder is bifurcated into two components: an encoder $\mathrm{Enc}_{\paramEnc}$ and the decoder $\mathrm{Dec}_{\paramDec}$, where $\paramEnc$ and $\paramDec$ are a set of the trainable parameters. 
Given a data $\bx\in\Real^{D}$, the encoder returns $(\meanEnc{\bx}, \bm{\sigma}_{\bx})=\mathrm{Enc}_{\paramEnc}(\bx), \meanEnc{\bx}\in\mathbb{R}^d, \bm{\sigma}_{\bx}\in\mathbb{R}^d$ to define the latent variable $\bz \in\mathbb{R}^d$ by $\bz = \meanEnc{\bx} + \bm{\sigma}_{\bx} \odot \bm{\varepsilon}$, where $\odot$ is the Hadamard product and $\bm{\varepsilon}$ is a $d$-dimensional random vector, whose distribution is a unit Gaussian $\bm{\varepsilon} \sim \Normal{\vecZero}{\IdentityMatrix}$. The two symbols $\vecZero\;\text{and}\;\IdentityMatrix$ respectively express the zero mean vector and the variance-covariance matrix defined by an identity matrix, respectively. By the definition, $\bz\sim\Normal{\bmu_{\bx}}{\bSigma_{\bm x}}$, $\bSigma_{\bm x}=\mathrm{diag}\left(\bm{\sigma}_{\bx} \odot \bm{\sigma}_{\bx}\right)$.

The auto-encoder is trained under an objective that maximizes the log-likelihood $\log p_{\bm \theta}(\bx)$. Since the direct maximization  is intractable, the authors of~\cite{DBLP:journals/corr/KingmaW13} proposed the variational lower-bound to indirectly maximize $\log p_{\bm \theta}(\bx)$, where $p_{\paramDec}(\bx) = \int p_{\paramDec}(\bx | \bz) p(\bz) \dx{\bz}$,  $p_{\paramDec}(\bx | \bz)$ is a likelihood defined by the decoder $\mathrm{Dec}_{\paramDec}$, and $p(\bz)$ is the prior distribution. The lower-bound $\mathcal{L}\left(\firstX; \bm{\phi}, \bm{\theta}\right)$ has a form of
\begin{equation}
\label{eq: original vae lower-bound}
    \mathcal{L}\left(\firstX; \bm{\phi}, \bm{\theta}\right) := \underbrace{\Exp{q_{\paramEnc}(\bz | \bx)}{\log p_{\paramDec}(\bx | \bz)}}_\text{Reconstruction error} - \underbrace{\KLDiv{q_{\paramEnc}(\bz | \bx)}{p(\bz)}}_\text{Regularization term},
\end{equation}
where $q_{\paramEnc}(\bz | \bx)$ is the approximated posterior distribution defined by $\mathrm{Enc}_{\paramEnc}$. The symbol $D_{\textrm{KL}}$ means the Kullback-Leibler (KL) divergence. In the original VAE, the normal Gaussian is set to the prior, and  the KL term can be expressed as a closed form~\cite{DBLP:journals/corr/KingmaW13}. Usually, the reconstruction term is approximately computed by a sample from the approximated posterior.

\subsection{Existing Methods for Building Robust VAE}
\label{subsec:other methods}
The researchers in~\cite{willetts2021improving} introduced the regularization method called \emph{total correlation} to the original VAE, which serves to smoothen the embeddings in the latent space. Additionally, by introducing a hierarchical VAE, it was able to achieve both adversarial robustness and high reconstruction quality. 
The authors of~\cite{NEURIPS2020_ac10ff19} suggest that VAEs do not consistently encode typical samples generated from their own decoder. They propose an alternative construction of the VAE with a Markov chain alternating between encoder and decoder, to enforce their consistency. Their method is reported to perform well in downstream classification tasks with adversarial inputs.

\subsection{Smooth Encoder}
\label{subsec:se}
As briefly mentioned in Section~\ref{sec:introduction}, the authors of~\cite{Cemgil2020Adversarially} experimentally found that the original VAE was not robust against an input out of the the empirical distributional support. To solve the problem, 
they proposed SE. This method assumes that the types of future attacks are known beforehand, thereby generating adversarial data $\firstX$ from the original data $\originalX$ tailored to these expected attacks.  The method then employs a regularization to ensure the proximity between the representations of the original and the adversarial data. 
This subsequently leads to a notable minimization of the entropy-regularized Wasserstein distance across the data representations. In SE's framework, the derivation of a lower bound on the joint distribution of $(\originalX, \firstX)$ plays a crucial role, leading to a corresponding lower bound on the log-likelihood by integrating out $x'$.

In contrast to SE, our method is not reliant on foreknowledge of attack types, thus simplifying its computational process and eliminating the iterative gradient calculations required for creating adversarial data in SE. Additionally, our method is based on  a variational lower bound directly on the joint distribution over the augmented data pair, as we will soon discuss.

\section{Proposed Method}
\label{sec:proposed method}
Let $\firstX$ and $ \secondX$ be two augmented data instances generated from the original data $\originalX$. The joint distribution of this pair, denoted as $p\left(\firstX, \secondX\right)$, is assumed to have the following distribution~\cite{haochen2021provable}:
\begin{equation}
    \label{eq:haochen data augmentation in original space}
    \begin{split}
        p(\firstX, \secondX) 
        &= \Exp{\originalX \sim p(\originalX)}{A(\firstX | \originalX) A(\secondX | \originalX)} \\
        &= \int A(\firstX | \originalX) A(\secondX | \originalX) p(\originalX) \dx{\originalX}, 
    \end{split}
\end{equation}
where 
$p(\originalX)$ is the original data distribution, and 
$A(\cdot | \originalX)$ is the augmented data distribution conditioning on $\originalX$. 

Similarly to the original VAE objective in Eq.\eqref{eq: original vae lower-bound}, we consider a variational lower-bound of log-likelihood of the joint distribution over $\pairX = \left(\firstX, \secondX\right) $, as follows:
\begin{equation}
\label{eq:elbo paired joint}
    \begin{split}
        \log p_{\paramDec}(\pairX) &\geq \mathcal{L}\left(\Bar{\firstX}; \bm{\phi}, \bm{\theta}\right) \\
        &= \underbrace{\Exp{\pairZ \sim q_{\paramEnc}(\pairZ | \pairX)}{\log p_{\paramDec}(\pairX | \pairZ)}}_{\text{\rm \textcircled{\scriptsize 1}}} \underbrace{- \KLDiv{q_{\paramEnc}(\pairZ | \pairX)}{p(\pairZ)}}_{\text{\rm \textcircled{\scriptsize 2}}},
    \end{split}
\end{equation}
where $\pairZ = \left( \firstZ, \secondZ \right)$ are the corresponding latent variables to $\pairX = (\firstX, \secondX)$.

Inspired by the formulation presented in Eq.\eqref{eq:haochen data augmentation in original space}, we reconsider the generative process of the augmented data from the latent space perspective. We assume that, the corresponding latent variables, $\firstZ$ and $\secondZ$, follow some conditional distribution given the original one  $a(\cdot | \originalZ)$. By adopting Gaussian distributions for $a(\cdot | \originalZ)$, we achieve a closed-form derivation for Eq.\eqref{eq:elbo paired joint}. Our construction of a robust VAE will be based on the maximization of the novel lower bound. 
\begin{dfn}
\label{dfn:latent generating process}
(Variant of Example~3.8 in~\cite{haochen2021provable})
 The prior $p(\pairZ)$ in Eq.\eqref{eq:elbo paired joint} is given by
\begin{equation}
    \label{eq:haochen data augmentation in latent space}
    p(\pairZ) 
    = \int a(\firstZ | \originalZ) a(\secondZ | \originalZ) p(\originalZ)  \dx{\originalZ},
\end{equation} 
where $p(\originalZ)=\NormalPDF{\originalZ}{\vecZero}{\IdentityMatrix}$, $a(\firstZ | \originalZ) = \NormalPDF{\firstZ}{\originalZ}{\augVar}$, and $a(\secondZ | \originalZ) = \NormalPDF{\secondZ}{\originalZ}{\augVar}$.
\end{dfn}
Note that we can recover Definition~\ref{dfn:latent generating process} by setting a normal Gaussian to the original data distribution of Example~3.8 in~\cite{haochen2021provable}.

\subsection{Proposed Variational Lower-Bound}
\label{subsec:proposed lower-bound}
\begin{prop}
\label{prop: prior latent joint as closed form}
The prior $p(\pairZ)$ in Definition~\ref{dfn:latent generating process} has a form of 
\begin{equation} 
\label{equation:augmented prior of normal gaussian}
    \NormalPDF{\vecZero}{\firstZ - \secondZ}{2 \augVar}  \, \NormalPDF{\frac{\firstZ + \secondZ}{2}}{\vecZero}{\IdentityMatrix + \frac{1}{2} \augVar},
\end{equation}
where the first and second terms are defined by Eq.\eqref{eq: multivariate normal distribution}.
\end{prop}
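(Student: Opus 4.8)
The plan is to evaluate the integral defining $p(\pairZ)$ in \eqref{eq:haochen data augmentation in latent space} in closed form by applying the elementary Gaussian product identity twice, so that the two factors of \eqref{equation:augmented prior of normal gaussian} emerge one after the other. The two instances of the identity I would use are: for a shared covariance, a product of two Gaussian densities in a common variable reassembles into a single Gaussian in that variable times a Gaussian constant in the means, $\NormalPDF{\originalZ}{\firstZ}{\augVar}\,\NormalPDF{\originalZ}{\secondZ}{\augVar} = \NormalPDF{\firstZ}{\secondZ}{2\augVar}\,\NormalPDF{\originalZ}{\tfrac{\firstZ+\secondZ}{2}}{\tfrac{1}{2}\augVar}$; and the marginalization rule $\int \NormalPDF{\originalZ}{\bm{c}}{\bm{C}}\,\NormalPDF{\originalZ}{\vecZero}{\IdentityMatrix}\,\dx{\originalZ} = \NormalPDF{\bm{c}}{\vecZero}{\bm{C}+\IdentityMatrix}$. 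Both are special cases of the general formula with posterior covariance $(A^{-1}+B^{-1})^{-1}$ and mean $C(A^{-1}a+B^{-1}b)$.

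First I would rewrite each conditional as a Gaussian in the integration variable $\originalZ$: since $\augVar$ is symmetric, $a(\firstZ\mid\originalZ)=\NormalPDF{\firstZ}{\originalZ}{\augVar}=\NormalPDF{\originalZ}{\firstZ}{\augVar}$, and likewise for $\secondZ$. Applying the first identity to their product peels off the factor $\NormalPDF{\firstZ}{\secondZ}{2\augVar}$, which is constant in $\originalZ$ and so can be pulled outside the integral; by translation symmetry of the Gaussian it equals $\NormalPDF{\vecZero}{\firstZ-\secondZ}{2\augVar}$, the first term of \eqref{equation:augmented prior of normal gaussian}. What remains under the integral is $\NormalPDF{\originalZ}{\tfrac{\firstZ+\secondZ}{2}}{\tfrac12\augVar}\,p(\originalZ)$ with $p(\originalZ)=\NormalPDF{\originalZ}{\vecZero}{\IdentityMatrix}$. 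Then I would integrate out $\originalZ$ via the second identity with $\bm{c}=\tfrac{\firstZ+\secondZ}{2}$ and $\bm{C}=\tfrac12\augVar$, producing exactly $\NormalPDF{\tfrac{\firstZ+\secondZ}{2}}{\vecZero}{\IdentityMatrix+\tfrac12\augVar}$, the second term. Multiplying the two factors yields the claim.

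There is no conceptual difficulty here; the only real work is bookkeeping the normalizing determinants so that the constants combine exactly, rather than merely up to proportionality. The cleanest safeguard is to invoke the product identity in its fully normalized form, which already absorbs the $\sqrt{|2\pi\,\cdot|}$ factors, instead of completing the square by hand and reconstructing the prefactors afterward. As an independent cross-check I would verify the result by the moment route: the generative process makes $(\firstZ,\secondZ)$ jointly Gaussian with mean $\vecZero$ and blocks $\mathrm{Var}(\firstZ)=\mathrm{Var}(\secondZ)=\IdentityMatrix+\augVar$ and $\mathrm{Cov}(\firstZ,\secondZ)=\IdentityMatrix$. The linear change of variables $(\firstZ,\secondZ)\mapsto(\firstZ-\secondZ,\tfrac{\firstZ+\secondZ}{2})$ has unit Jacobian and maps these to two \emph{uncorrelated}, hence independent, Gaussians with covariances $2\augVar$ and $\IdentityMatrix+\tfrac12\augVar$, which reproduces \eqref{equation:augmented prior of normal gaussian} and confirms the direct computation.
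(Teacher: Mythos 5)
Your proposal is correct and follows essentially the same route as the paper's proof: rewrite the conditionals as Gaussians in $\originalZ$, apply the Gaussian product identity to peel off $\NormalPDF{\vecZero}{\firstZ - \secondZ}{2\augVar}$, and then absorb the prior via a second application of the identity so that the leftover normalized Gaussian in $\originalZ$ integrates to one, yielding $\NormalPDF{\frac{\firstZ + \secondZ}{2}}{\vecZero}{\IdentityMatrix + \frac{1}{2}\augVar}$ (your ``marginalization rule'' is exactly this product-then-integrate step). The moment-based cross-check via the unit-Jacobian change of variables $(\firstZ,\secondZ)\mapsto(\firstZ-\secondZ,\frac{\firstZ+\secondZ}{2})$ is a sound bonus not present in the paper, but the main argument is the same.
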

The proof is given in Appendix~\ref{append:proof of closed form based on N(0,I) p_tilde_z}.

\begin{thm}
\label{thm: proposed variational bound}
Assume the independence for $q_{\paramEnc}(\pairZ | \pairX)$ of Eq.\eqref{eq:elbo paired joint} as follows: $q_{\paramEnc}(\pairZ | \pairX) = q_{\paramEnc}(\firstZ | \firstX) \cdot q_{\paramEnc}(\secondZ | \secondX)$, i.e., $q_{\paramEnc}(\pairZ | \pairX) = \NormalPDF{\firstZ}{\meanEnc{\firstX}}{\varEnc{\firstX}} \cdot \NormalPDF{\secondZ}{\meanEnc{\secondX}}{\varEnc{\secondX}}$; see the definitions of $\meanEnc{\firstX}$ and $\varEnc{\firstX}$ in Section~\ref{subsec:vae}.
Then, \text{\rm \textcircled{\scriptsize 2}} of Eq.\eqref{eq:elbo paired joint} can be written as a closed form:
\begin{equation}
\label{eq: closed form Dkl with standard normal}
    \begin{split}
         &\text{\rm \textcircled{\scriptsize 2}}\!=\!- \frac{1}{4} \Bigl\{ \Tr{\left(\left(
         \augVar^{-1} + \left(2\IdentityMatrix + \augVar  \right)^{-1} \right) (\varEnc{\firstX} + \varEnc{\secondX})\right)} + \text{\rm \textcircled{\scriptsize 3}} \Bigr\}\\ 
         &\;\;\;\;\;\;\;\;\;\;\; +\frac{1}{2} \Bigl\{ \log \determinant{\varEnc{\firstX} } + \log \determinant{\varEnc{\secondX}}  + 2d\left(1 - \log 2 \right) -  \log \determinant{\augVar} - \log \determinant{2\IdentityMatrix + \augVar} \Bigr\},
    \end{split}
\end{equation}
where $\text{\rm \textcircled{\scriptsize 3}}=\quadForm{\augVar^{-1}}{(\meanEnc{\firstX} - \meanEnc{\secondX})} + \quadForm{\left(2 \IdentityMatrix + \augVar \right)^{-1}}{(\meanEnc{\firstX} + \meanEnc{\secondX})}$.
Moreover, assume that $p_{\paramDec}(\pairX | \pairZ) = p_{\paramDec}(\bm{x}|\bm{z})\cdot p_{\paramDec}(\bm{x}^\prime|\bm{z}^\prime)$. 
%\textcolor{blue}{
Then, based on \text{\rm \textcircled{\scriptsize 2}} of Eq.\eqref{eq: closed form Dkl with standard normal}, 
$\mathcal{L}\left(\Bar{\firstX}; \bm{\phi}, \bm{\theta}\right)$ of Eq.\eqref{eq:elbo paired joint} is equal to
\begin{equation}
    \label{eq:likelihood decomposition via independence property}
    \mathbb{E}_{\firstZ \sim q_{\paramEnc}(\firstZ | \firstX)}\left[ \log p_{\paramDec}(\bm{x}|\bm{z}) \right] + \mathbb{E}_{\secondZ \sim q_{\paramEnc}(\secondZ | \secondX)}\left[ \log p_{\paramDec}(\bm{x}^\prime|\bm{z}^\prime)\right] + \text{\rm \textcircled{\scriptsize 2}}.
\end{equation}
\end{thm}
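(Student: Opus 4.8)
The plan is to treat \text{\rm \textcircled{\scriptsize 2}} as a Kullback--Leibler divergence between two Gaussians over the stacked $2d$-dimensional variable $\pairZ=(\firstZ,\secondZ)$, and to exploit the factorized form of the prior supplied by Proposition~\ref{prop: prior latent joint as closed form}. By the independence assumption, $q_{\paramEnc}(\pairZ\mid\pairX)$ is Gaussian with block-diagonal covariance $\mathrm{diag}(\varEnc{\firstX},\varEnc{\secondX})$ and mean $(\meanEnc{\firstX},\meanEnc{\secondX})$, while Proposition~\ref{prop: prior latent joint as closed form} shows that $p(\pairZ)$ factorizes as a Gaussian in $\bm{u}:=\firstZ-\secondZ$ with covariance $2\augVar$ times a Gaussian in $\bm{w}:=(\firstZ+\secondZ)/2$ with covariance $\IdentityMatrix+\tfrac12\augVar$. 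I would therefore write $\text{\rm \textcircled{\scriptsize 2}}=-\KLDiv{q_{\paramEnc}}{p}=\Exp{q_{\paramEnc}}{\log p(\pairZ)}-\Exp{q_{\paramEnc}}{\log q_{\paramEnc}(\pairZ\mid\pairX)}$ and handle the entropy and cross-entropy separately. An equivalent route is to apply the two-Gaussian KL formula directly with the explicit joint prior covariance, but the factorized form is cleaner here.

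The entropy $-\Exp{q_{\paramEnc}}{\log q_{\paramEnc}(\pairZ\mid\pairX)}$ is immediate from the block-diagonal structure: it contributes $\tfrac12\log\determinant{\varEnc{\firstX}}+\tfrac12\log\determinant{\varEnc{\secondX}}$ together with a dimension-dependent constant. For the cross-entropy I would use the factorization of the prior to split $-\Exp{q_{\paramEnc}}{\log p(\pairZ)}$ into a term in $\bm{u}$ and a term in $\bm{w}$. Under $q_{\paramEnc}$, linearity gives $\bm{u}\sim\Normal{\meanEnc{\firstX}-\meanEnc{\secondX}}{\varEnc{\firstX}+\varEnc{\secondX}}$ and $\bm{w}\sim\Normal{(\meanEnc{\firstX}+\meanEnc{\secondX})/2}{\tfrac14(\varEnc{\firstX}+\varEnc{\secondX})}$; note that only these two marginals enter, so the cross-covariance of $\bm{u}$ and $\bm{w}$ is irrelevant precisely because the prior treats them independently. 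Applying $\Exp{}{\bm{y}^{\top}M\bm{y}}=\tr(M\Sigma)+\bm{m}^{\top}M\bm{m}$ for $\bm{y}\sim\Normal{\bm{m}}{\Sigma}$ to each part, the $\bm{u}$-part produces the $\augVar^{-1}$ contributions and the $\bm{w}$-part the $(2\IdentityMatrix+\augVar)^{-1}$ contributions after substituting $\IdentityMatrix+\tfrac12\augVar=\tfrac12(2\IdentityMatrix+\augVar)$, hence $(\IdentityMatrix+\tfrac12\augVar)^{-1}=2(2\IdentityMatrix+\augVar)^{-1}$. Collecting the quadratic pieces yields the trace term and the Mahalanobis term \text{\rm \textcircled{\scriptsize 3}}, each carrying the factor $\tfrac14$, while the normalizing constants of the two prior factors yield $-\tfrac12\log\determinant{\augVar}-\tfrac12\log\determinant{2\IdentityMatrix+\augVar}$ and a remaining scalar constant. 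I expect the main obstacle to be the careful bookkeeping of these scalar constants: the $2d$ dimension count from the two Mahalanobis expectations must be combined with the $\log 2$ factors that arise when rescaling the block covariances $2\augVar$ and $\IdentityMatrix+\tfrac12\augVar$ into the $\augVar$ and $2\IdentityMatrix+\augVar$ appearing in Eq.~\eqref{eq: closed form Dkl with standard normal}, and it is exactly here that sign and factor errors are easiest to make.

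The second claim is then routine. Under the two factorization hypotheses $p_{\paramDec}(\pairX\mid\pairZ)=p_{\paramDec}(\firstX\mid\firstZ)\,p_{\paramDec}(\secondX\mid\secondZ)$ and $q_{\paramEnc}(\pairZ\mid\pairX)=q_{\paramEnc}(\firstZ\mid\firstX)\,q_{\paramEnc}(\secondZ\mid\secondX)$, the reconstruction term \text{\rm \textcircled{\scriptsize 1}} becomes $\Exp{q_{\paramEnc}(\pairZ\mid\pairX)}{\log p_{\paramDec}(\firstX\mid\firstZ)+\log p_{\paramDec}(\secondX\mid\secondZ)}$. Since each summand depends on only one of $\firstZ,\secondZ$ and $q_{\paramEnc}$ factorizes, the unused factor integrates to one and each expectation collapses to a single-variable expectation, giving $\Exp{\firstZ\sim q_{\paramEnc}(\firstZ\mid\firstX)}{\log p_{\paramDec}(\firstX\mid\firstZ)}+\Exp{\secondZ\sim q_{\paramEnc}(\secondZ\mid\secondX)}{\log p_{\paramDec}(\secondX\mid\secondZ)}$. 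Adding the closed form \text{\rm \textcircled{\scriptsize 2}} obtained above recovers Eq.~\eqref{eq:likelihood decomposition via independence property}, completing the proof.
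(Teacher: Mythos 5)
Your proposal is correct and takes essentially the same route as the paper's proof sketch: both split \text{\rm \textcircled{\scriptsize 2}} into the cross-entropy $\Exp{q_{\paramEnc}(\pairZ|\pairX)}{\log p(\pairZ)}$ plus the entropy of $q_{\paramEnc}(\pairZ|\pairX)$, invoke the factorized prior of Proposition~\ref{prop: prior latent joint as closed form}, and evaluate each piece via the Gaussian quadratic-form and cross-entropy identities of Lemmas~\ref{lemma: quadOfMeanCov} and~\ref{proposition: CrossEntropy}, with the reconstruction term splitting exactly as you say. The only cosmetic difference is that you apply the quadratic-form identity once to the marginals of $\firstZ-\secondZ$ and $(\firstZ+\secondZ)/2$, whereas the paper conditions on $\secondZ$ and iterates the expectation (Eqs.~\eqref{eq: eq from lemma 1 and 2}--\eqref{eq: component eq2 of Dkl}); the two computations coincide.
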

The proof sketch is given in Section~\ref{subsec:proofs}. 
We also derived the variational lower bound of Eq.\eqref{eq:elbo paired joint} in the case that $p(\originalZ)$ of Definition~\ref{dfn:latent generating process} is the Gaussian mixture model; see Appendix~\ref{append: GMM case}. 
\begin{rmk}
    We remark the connection between our proposed bound in Theorem~\ref{thm: proposed variational bound} and the standard bound Eq.\eqref{eq: original vae lower-bound}. 
    We select a "typical" input data for $\secondX$ so that $\meanEnc{\secondX} = \vecZero, \varEnc{\secondX} = \IdentityMatrix$, and set $\augVar = \frac{\sqrt{5}-1}{2} \IdentityMatrix$. Disregarding constants that do not depend on network parameters or the input $\firstX$, the KL term of Eq.\eqref{eq:elbo paired joint} simplifies to $-\frac{1}{2} \left(\meanEnc{\firstX}^\top \meanEnc{\firstX} + \Tr{ (\varEnc{\firstX})} - \log \determinant{\varEnc{\firstX}} \right)$, aligning with the KL term from the standard VAE in Eq.\eqref{eq: original vae lower-bound}.
\end{rmk}

\begin{rmk}
    Regarding \text{\rm \textcircled{\scriptsize 3}} in Eq.\eqref{eq: closed form Dkl with standard normal} - the terms related to the mean vectors $\boldsymbol{\mu}_{\boldsymbol{x}}$ and $\boldsymbol{\mu}_{\boldsymbol{x}^\prime}$ outputted by the encoder, when $\augVar \ll \IdentityMatrix $: $\augVar$ is substantially less than $\IdentityMatrix$ in terms of e.g., maximum eigenvalues, it follows that $\augVar^{-1} \gg (2\IdentityMatrix + \augVar)^{-1}$. Consequently, the first term predominantly enforces $\meanEnc{\firstX}$ and $\meanEnc{\secondX}$ to converge. Furthermore, as shown in the following proposition, \text{\rm \textcircled{\scriptsize 3}} enables the model to learn the appropriate distance for representations from the origin.
\end{rmk}

\begin{prop}
\label{prop: mean vector terms}
The term \text{\rm \textcircled{\scriptsize 3}} in Eq.\eqref{eq: closed form Dkl with standard normal}  can  further be derived as 
\begin{equation}
\label{eq: waida-kun-derivation}
    % \begin{split}
        \text{\rm \textcircled{\scriptsize 3}} = \boldsymbol{\mu}_{\boldsymbol{x}}^{\top} \augVar^{-1} \boldsymbol{\mu}_{\boldsymbol{x}}+\boldsymbol{\mu}_{\boldsymbol{x}^{\prime}}^{\top} \augVar^{-1} \boldsymbol{\mu}_{\boldsymbol{x}^{\prime}} \\
        -\left(\boldsymbol{\mu}_{\boldsymbol{x}}+\boldsymbol{\mu}_{\boldsymbol{x}^{\prime}}\right)^{\top} \augVar^{-1}\left(2^{-1} \boldsymbol{I}+\augVar^{-1}\right)^{-1} \augVar^{-1}\left(\boldsymbol{\mu}_{\boldsymbol{x}}+\boldsymbol{\mu}_{\boldsymbol{x}^{\prime}}\right).
   % \end{split}
\end{equation}
\end{prop}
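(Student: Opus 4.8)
The plan is to reduce the claim to two elementary ingredients: a matrix-inversion identity that re-expresses $(2\IdentityMatrix+\augVar)^{-1}$ through $\augVar^{-1}$ and the inner matrix appearing in the statement, followed by the parallelogram law for the quadratic form $\bm{v}\mapsto\bm{v}^{\top}\augVar^{-1}\bm{v}$. Throughout I would write $\bSigma:=\augVar$ and note at the outset that $\bSigma$, $\bSigma^{-1}$, $2\IdentityMatrix+\bSigma$, and $2^{-1}\IdentityMatrix+\bSigma^{-1}$ are all functions of the single symmetric matrix $\bSigma$, hence mutually commuting and simultaneously diagonalizable. This is what lets every manipulation below be checked on a single eigenvalue $\sigma$ of $\bSigma$ rather than on full matrices.

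First I would establish the key identity
\begin{equation*}
(2\IdentityMatrix+\augVar)^{-1} = \augVar^{-1} - \augVar^{-1}\left(2^{-1}\IdentityMatrix+\augVar^{-1}\right)^{-1}\augVar^{-1}.
\end{equation*}
The cleanest route is a push-through computation: factor $2^{-1}\IdentityMatrix+\augVar^{-1}=\augVar^{-1}(2^{-1}\augVar+\IdentityMatrix)$, so that $\augVar^{-1}(2^{-1}\IdentityMatrix+\augVar^{-1})^{-1}\augVar^{-1}=\augVar^{-1}(2^{-1}\augVar+\IdentityMatrix)^{-1}=2\augVar^{-1}(2\IdentityMatrix+\augVar)^{-1}$, and then verify $\augVar^{-1}-2\augVar^{-1}(2\IdentityMatrix+\augVar)^{-1}=(2\IdentityMatrix+\augVar)^{-1}$. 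Because all factors commute, this last verification collapses to the scalar statement $\tfrac1\sigma-\tfrac{2}{\sigma(2+\sigma)}=\tfrac{1}{2+\sigma}$ read off on each eigenvalue. I expect this matrix-inversion step to be the main obstacle, essentially as a bookkeeping matter: one must keep the order of the non-identity factors straight and explicitly justify commutativity before collapsing the products.

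With the identity in hand, I would substitute it into the second summand of \text{\rm \textcircled{\scriptsize 3}}, obtaining
\begin{equation*}
\begin{split}
(\meanEnc{\firstX}+\meanEnc{\secondX})^{\top}(2\IdentityMatrix+\augVar)^{-1}(\meanEnc{\firstX}+\meanEnc{\secondX})
={}& (\meanEnc{\firstX}+\meanEnc{\secondX})^{\top}\augVar^{-1}(\meanEnc{\firstX}+\meanEnc{\secondX}) \\
&- (\meanEnc{\firstX}+\meanEnc{\secondX})^{\top}\augVar^{-1}\left(2^{-1}\IdentityMatrix+\augVar^{-1}\right)^{-1}\augVar^{-1}(\meanEnc{\firstX}+\meanEnc{\secondX}).
\end{split}
\end{equation*}
The last term is exactly the correction term appearing in the claim, so it remains only to combine the first summand of \text{\rm \textcircled{\scriptsize 3}}, namely $(\meanEnc{\firstX}-\meanEnc{\secondX})^{\top}\augVar^{-1}(\meanEnc{\firstX}-\meanEnc{\secondX})$, with the surviving $(\meanEnc{\firstX}+\meanEnc{\secondX})^{\top}\augVar^{-1}(\meanEnc{\firstX}+\meanEnc{\secondX})$. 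Applying the parallelogram law for the inner product induced by $\augVar^{-1}$ collapses this sum onto the diagonal terms $\meanEnc{\firstX}^{\top}\augVar^{-1}\meanEnc{\firstX}$ and $\meanEnc{\secondX}^{\top}\augVar^{-1}\meanEnc{\secondX}$, which yields the stated form. The one point I would double-check carefully is the numerical coefficient produced by this final step, since $\norm{\bm{u}-\bm{v}}^2+\norm{\bm{u}+\bm{v}}^2=2\norm{\bm{u}}^2+2\norm{\bm{v}}^2$ carries a factor $2$; reconciling that factor against the coefficients of $\meanEnc{\firstX}^{\top}\augVar^{-1}\meanEnc{\firstX}$ and $\meanEnc{\secondX}^{\top}\augVar^{-1}\meanEnc{\secondX}$ in the statement is the place where an off-by-a-constant slip is most likely, so I would confirm it with an independent scalar check (e.g. $d=1$, $\augVar=\IdentityMatrix$, $\meanEnc{\firstX}=1$, $\meanEnc{\secondX}=0$) before declaring the identity closed.
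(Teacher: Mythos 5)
Your two ingredients are exactly the paper's: its proof invokes the same inversion identity $(2\IdentityMatrix+\augVar)^{-1}=\augVar^{-1}-\augVar^{-1}\left(2^{-1}\IdentityMatrix+\augVar^{-1}\right)^{-1}\augVar^{-1}$ (citing Corollary~1.7.1 of Schott's matrix-analysis text rather than deriving it, though your push-through derivation is a perfectly adequate substitute, and in fact your final verification step does not even need commutativity, since $\augVar^{-1}-2\augVar^{-1}(2\IdentityMatrix+\augVar)^{-1}=\augVar^{-1}\bigl((2\IdentityMatrix+\augVar)-2\IdentityMatrix\bigr)(2\IdentityMatrix+\augVar)^{-1}$ collapses directly), followed by the same parallelogram identity for the quadratic form induced by $\augVar^{-1}$. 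So in method you have reproduced the paper's proof.

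However, the factor-of-$2$ reconciliation you flagged as the likely failure point does in fact fail --- and not through any slip of yours. Combining the two ingredients yields
\begin{equation*}
\text{\rm \textcircled{\scriptsize 3}}
= 2\,\meanEnc{\firstX}^{\top}\augVar^{-1}\meanEnc{\firstX}
+ 2\,\meanEnc{\secondX}^{\top}\augVar^{-1}\meanEnc{\secondX}
- \left(\meanEnc{\firstX}+\meanEnc{\secondX}\right)^{\top}\augVar^{-1}\left(2^{-1}\IdentityMatrix+\augVar^{-1}\right)^{-1}\augVar^{-1}\left(\meanEnc{\firstX}+\meanEnc{\secondX}\right),
\end{equation*}
so Eq.~\eqref{eq: waida-kun-derivation} as printed is missing the coefficient $2$ on its first two terms; the paper's one-line ``the result follows from the above two equations'' conceals the dropped factor, since its own second displayed equation carries the $2$ explicitly. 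Your proposed scalar sanity check detects this: with $d=1$, $\augVar=\IdentityMatrix$, $\meanEnc{\firstX}=1$, $\meanEnc{\secondX}=0$, the definition of $\text{\rm \textcircled{\scriptsize 3}}$ gives $1+\tfrac13=\tfrac43$, while the right-hand side of Eq.~\eqref{eq: waida-kun-derivation} evaluates to $1-\tfrac23=\tfrac13$; the factor-$2$--corrected version gives $2-\tfrac23=\tfrac43$, as required. The qualitative discussion following the proposition (the first two terms pull the encoder means toward the origin, the last term pushes back) is unaffected by the constant. In short: your route is the paper's route, executed correctly, and carrying it to completion proves the corrected identity rather than the one printed --- your insistence on an independent numerical check is precisely what the paper's proof needed.
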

See the proof in Appendix \ref{append:proof of mean vectors terms}. In the right hand of Eq.\eqref{eq: waida-kun-derivation}, the first and second terms force both $\boldsymbol{\mu}_{\boldsymbol{x}}$ and $\boldsymbol{\mu}_{\boldsymbol{x}^{\prime}}$ to be the zero vector, while the last term keeps these vectors away and avoid to contract to the zero vector. Thus, the model has to learn to balance the two.

\subsection{Proof Sketch for Theorem~\ref{thm: proposed variational bound}}
\label{subsec:proofs}
\begin{lem}
\label{lemma: quadOfMeanCov} (Corollary 3.2b.1 of~\cite{mathai1992quadratic})
Consider the random variable $\bm{x} \sim \mathcal{P}$, whose mean and the variance matrix are $\bm{\mu}$ and $\bm{\Sigma}$, respectively. For a symmetric matrix $\bm{A}$, the following equation holds:
$\Exp{\bm{x} \sim \mathcal{P}}{\quadForm{\bm{A}}{\bm{x}}} = \Tr{\left(\bm{A} \bm{\Sigma} \right)} + \quadForm{\bm{A}}{\bm{\mu}}$. Then, with a constant vector $\bm{a}$, the following holds: $\Exp{\bm{x} \sim \Normal{\bm{\mu}}{\bm{\Sigma}}}{\quadForm{\bm{A}}{\left( \bm{x} - \bm{a}\right)}} =\Tr{\left(\bm{A} \bm{\Sigma} \right)} + \quadForm{\bm{A}}{\left(\bm{\mu} - \bm{a}\right)}$.
\end{lem}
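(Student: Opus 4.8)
The plan is to establish the first (general, distribution-free) identity for the expectation of a quadratic form via the standard trace trick, and then obtain the second, shifted identity as an immediate corollary through a change of variable. A point worth noting up front is that the normality label $\Normal{\bm{\mu}}{\bm{\Sigma}}$ appearing in the second display is not actually needed: only the first two moments of the distribution enter. Accordingly, I would prove both statements for an arbitrary distribution $\mathcal{P}$ with mean $\bm{\mu}$ and covariance $\bm{\Sigma}$, and then specialize.

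First I would prove $\Exp{\bm{x} \sim \mathcal{P}}{\quadForm{\bm{A}}{\bm{x}}} = \Tr{(\bm{A} \bm{\Sigma})} + \quadForm{\bm{A}}{\bm{\mu}}$. Since $\bm{x}^{\top} \bm{A} \bm{x}$ is a scalar it equals its own trace, and by the cyclic property of the trace $\bm{x}^{\top} \bm{A} \bm{x} = \Tr{(\bm{A} \bm{x} \bm{x}^{\top})}$. Taking expectations and exchanging $\bE$ with the linear trace operator gives $\Exp{\bm{x} \sim \mathcal{P}}{\quadForm{\bm{A}}{\bm{x}}} = \Tr{(\bm{A}\, \Exp{\bm{x} \sim \mathcal{P}}{\bm{x} \bm{x}^{\top}})}$. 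Substituting the second-moment identity $\Exp{\bm{x} \sim \mathcal{P}}{\bm{x} \bm{x}^{\top}} = \bm{\Sigma} + \bm{\mu} \bm{\mu}^{\top}$ and splitting the trace over the sum yields the claim, where the final term $\Tr{(\bm{A} \bm{\mu} \bm{\mu}^{\top})} = \bm{\mu}^{\top} \bm{A} \bm{\mu} = \quadForm{\bm{A}}{\bm{\mu}}$ again by cyclicity. Symmetry of $\bm{A}$ is not strictly required for this computation, but it is the setting in which the relevant quadratic forms arise in Theorem~\ref{thm: proposed variational bound}.

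For the second identity I would set $\bm{y} = \bm{x} - \bm{a}$ for the constant vector $\bm{a}$. Then $\Exp{}{\bm{y}} = \bm{\mu} - \bm{a}$, and because shifting by a constant leaves the covariance unchanged, $\mathrm{Cov}(\bm{y}) = \bm{\Sigma}$. Applying the first identity to $\bm{y}$ in place of $\bm{x}$ immediately gives $\Exp{}{\quadForm{\bm{A}}{(\bm{x} - \bm{a})}} = \Tr{(\bm{A} \bm{\Sigma})} + \quadForm{\bm{A}}{(\bm{\mu} - \bm{a})}$, as desired. There is essentially no obstacle here; the only subtlety worth flagging is the one already mentioned, namely that both identities hold verbatim for any $\mathcal{P}$ with the stated moments, so the Gaussian annotation on the second expectation should be read as the instance we apply later (with $\mathcal{P} = \Normal{\bm{\mu}}{\bm{\Sigma}}$) rather than as a hypothesis that the argument consumes.
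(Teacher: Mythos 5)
Your proof is correct. There is nothing in the paper to compare it against: the paper does not prove this lemma at all, but imports it verbatim as Corollary~3.2b.1 of~\cite{mathai1992quadratic}, so your trace-trick argument --- writing $\bm{x}^{\top}\bm{A}\bm{x} = \Tr{\left(\bm{A}\bm{x}\bm{x}^{\top}\right)}$, exchanging expectation with the trace, substituting $\Exp{\bm{x} \sim \mathcal{P}}{\bm{x}\bm{x}^{\top}} = \bm{\Sigma} + \bm{\mu}\bm{\mu}^{\top}$, and then deducing the shifted identity via $\bm{y} = \bm{x} - \bm{a}$ --- is the standard derivation and supplies a complete, self-contained proof where the paper relies on a citation. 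Your two side remarks are also accurate: symmetry of $\bm{A}$ is not needed for the computation (any square $\bm{A}$ works, since the trace manipulation never invokes it), and normality in the second display is not a hypothesis the argument consumes, only the first two moments; this is consistent with the first identity being stated for arbitrary $\mathcal{P}$, and it is harmless in context because every application in the proof of Theorem~\ref{thm: proposed variational bound} (Eqs.~\eqref{eq: eq from lemma 1 and 2}--\eqref{eq: component eq2 of Dkl} and Lemma~\ref{proposition: CrossEntropy}) does take $\mathcal{P}$ Gaussian.
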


\begin{figure*}[!t]
\centering
\includegraphics[scale=0.85]{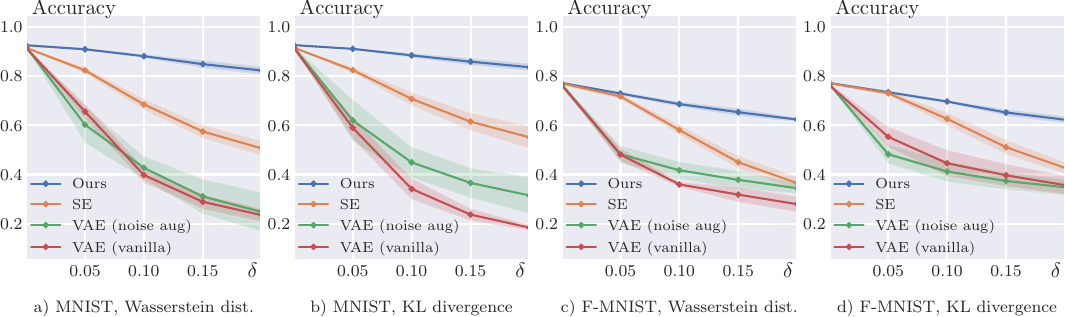}
\caption{Attack budget $\delta$ (horizontal axis) versus classification accuracy (vertical axis) on MNIST (a, b) and Fashion-MNIST (c, d) datasets. Our proposed method (blue) significantly outperforms baseline methods under severe adversarial attacks, and is on par with them as the adversarial signal vanishes. Shaded area indicates standard derivations over 5 runs.}
\label{fig: expr}
\end{figure*}

\begin{lem}
\label{proposition: CrossEntropy}
For the two $d$-dimensional multivariate normal distributions $\Normal{\bmu_{a}}{\bSigma_{a}}$ and $\Normal{\bmu_{b}}{\bSigma_{b}}$,
consider the cross-entropy $\Exp{\bx \sim \Normal{\bmu_{a}}{\bSigma_{a}}}{-\log \NormalPDF{\bx}{\bmu_{b}}{\bSigma_{b}} }$. Then, it is written as
$\frac{1}{2} \{ \Tr ( \bSigma_{a} \bSigma_{b}^{-1} ) + \quadForm{\bSigma_{b}^{-1}}{(\bmu_{a} - \bmu_{b})} + \log \determinant{\bSigma_{b}} + d \log 2\pi \}$.
Additionally, for the $d$-dimensional normal distribution $\Normal{\bmu}{\bSigma}$, the Shannon entropy of $\Normal{\bmu}{\bSigma}$, i.e., $H := \Exp{\bx \sim \Normal{\bmu}{\bSigma}}{-\log \NormalPDF{\bx}{\bmu}{\bSigma} }$ is written as $\frac{1}{2} \Set{d + d \log 2\pi + \log \determinant{\bSigma} }$.
\end{lem}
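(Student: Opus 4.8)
The plan is to expand the integrand $-\log\NormalPDF{\bx}{\muB}{\sigmaB}$ using the explicit Gaussian density in Eq.~\eqref{eq: multivariate normal distribution}, separate out the parts that do not depend on $\bx$, and then reduce the remaining expectation to the quadratic-form identity already established in Lemma~\ref{lemma: quadOfMeanCov}.

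First I would write $-\log\NormalPDF{\bx}{\muB}{\sigmaB} = \frac{1}{2}\quadForm{\sigmaB^{-1}}{(\bx-\muB)} + \frac{1}{2}\log\determinant{2\pi\sigmaB}$, and split the normalizing constant as $\log\determinant{2\pi\sigmaB} = d\log 2\pi + \log\determinant{\sigmaB}$. Since this term is independent of $\bx$, it passes through $\Exp{\bx\sim\Normal{\muA}{\sigmaA}}{\cdot}$ unchanged, so the whole computation collapses to evaluating the single expectation $\Exp{\bx\sim\Normal{\muA}{\sigmaA}}{\quadForm{\sigmaB^{-1}}{(\bx-\muB)}}$.

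For that nontrivial expectation I would apply Lemma~\ref{lemma: quadOfMeanCov} with the symmetric matrix $\bm{A}=\sigmaB^{-1}$ (symmetric because it is the inverse of a covariance matrix), the constant vector $\bm{a}=\muB$, and the distribution $\Normal{\muA}{\sigmaA}$; this immediately yields $\Tr{(\sigmaB^{-1}\sigmaA)} + \quadForm{\sigmaB^{-1}}{(\muA-\muB)}$. Rewriting $\Tr{(\sigmaB^{-1}\sigmaA)}=\Tr{(\sigmaA\sigmaB^{-1})}$ by cyclic invariance of the trace and combining it with the constant term and the overall factor $\frac{1}{2}$ reproduces the claimed cross-entropy formula. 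The Shannon-entropy statement then follows by specializing to $\muA=\muB=\bmu$ and $\sigmaA=\sigmaB=\bSigma$: the quadratic term vanishes and $\Tr{(\bSigma\bSigma^{-1})}=\Tr{\IdentityMatrix}=d$, leaving $\frac{1}{2}\{d+d\log 2\pi+\log\determinant{\bSigma}\}$.

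I do not expect a genuine obstacle here, since the result is essentially bookkeeping once Lemma~\ref{lemma: quadOfMeanCov} is in hand; the only points requiring care are the symmetry of $\sigmaB^{-1}$ (needed for the lemma to apply verbatim) and the correct extraction of the $(2\pi)^d$ factor when splitting the log-determinant.
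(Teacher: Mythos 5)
Your proposal is correct and follows essentially the same route as the paper's own proof in Appendix~\ref{append:proof lemma2}: expand $-\log\NormalPDF{\bx}{\bmu_b}{\bSigma_b}$, pull out the constant $\frac{1}{2}\log\determinant{2\pi\bSigma_b}=\frac{1}{2}(d\log 2\pi+\log\determinant{\bSigma_b})$, apply Lemma~\ref{lemma: quadOfMeanCov} with $\bm{A}=\bSigma_b^{-1}$ and $\bm{a}=\bmu_b$, and specialize $\bmu_a=\bmu_b$, $\bSigma_a=\bSigma_b$ for the entropy. Your added remarks on the symmetry of $\bSigma_b^{-1}$ and the cyclic trace identity are correct and merely make explicit what the paper uses implicitly.
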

Lemma~\ref{proposition: CrossEntropy} can be immediately proved by using Lemma~\ref{lemma: quadOfMeanCov}, as shown in Appendix~\ref{append:proof lemma2}.

From Lemmas~\ref{lemma: quadOfMeanCov} and~\ref{proposition: CrossEntropy}, we have
\begin{equation}
\label{eq: eq from lemma 1 and 2}
    \begin{split}
        &\Exp{\firstZ \sim \Normal{\meanEnc{\firstX}}{\varEnc{\firstX}}}{\log \NormalPDF{\vecZero}{\firstZ - \secondZ}{2\augVar}}\\
        &=-\frac{1}{4} \left(\Tr{\left( \augVar^{-1} \varEnc{\firstX}\right)} + \quadForm{\augVar^{-1}}{(\meanEnc{\firstX} - \secondZ)} \right) - \frac{1}{2} \log \determinant{4\pi \augVar}.
    \end{split}
\end{equation}
Since $q_{\paramEnc}(\pairZ | \pairX) = \NormalPDF{\firstZ}{\meanEnc{\firstX}}{\varEnc{\firstX}} \cdot \NormalPDF{\secondZ}{\meanEnc{\secondX}}{\varEnc{\secondX}}$, using Eq.\eqref{eq: eq from lemma 1 and 2}, we have
\begin{equation}
\label{eq: component eq1 of Dkl}
    \begin{split}
        &\Exp{\pairZ \sim q_{\paramEnc}(\pairZ | \pairX)}{\log \NormalPDF{\vecZero}{\firstZ - \secondZ}{2\augVar}} \\
        &=-\frac{1}{4} \Bigl\{ \Tr{(\augVar^{-1}\varEnc{\firstX})} + \Exp{\secondZ \sim \Normal{\meanEnc{\secondX}}{\varEnc{\secondX}}}{ \quadForm{\augVar^{-1}}{(\meanEnc{\firstX} - \secondZ)}} \Bigr\} -\frac{1}{2} \log \determinant{4 \pi \augVar}\\
        &= -\frac{1}{4} \Tr{\left(\augVar^{-1}(\varEnc{\firstX}\!+\!\varEnc{\secondX}) + \quadForm{\augVar}{(\meanEnc{\firstX}\!-\!\meanEnc{\secondX})} \right)} - \frac{1}{2}\left(\log \determinant{\augVar} + d \log 4 \pi \right).
    \end{split}
\end{equation}
Like deriving Eq.\eqref{eq: component eq1 of Dkl}, we have
\begin{equation}
\label{eq: component eq2 of Dkl}
    \begin{split}
        &\Exp{\pairZ \sim q_{\paramEnc}(\pairZ | \pairX) }{\log \Normal{\frac{\bm{z}+\bm{z}'}{2}}{ \bm{0}, \IdentityMatrix+\frac{1}{2} \augVar }}  \\
        & = -\frac{1}{4}\Bigl\{\Tr{ \left(\left( \varEnc{\firstX} + \varEnc{\secondX} \right) ( 2\IdentityMatrix + \augVar )^{-1}\right) } + \left( \meanEnc{\firstX}+\meanEnc{\secondX} \right)^\top ( 2\IdentityMatrix + \augVar )^{-1} ( \meanEnc{\firstX} + \meanEnc{\secondX} )  \Bigr\} \\
        &\quad\quad-\frac{1}{2}\left(\log \determinant{\augVar + 2\IdentityMatrix} + d \log 4 \pi \right).
    \end{split}
\end{equation}
In Eq.\eqref{eq:elbo paired joint}, by the definition of the KL divergence, the part \text{\rm \textcircled{\scriptsize 2}} equals to 
$$\underbrace{\Exp{q_{\paramEnc}(\pairZ | \pairX)}{\log p_{\bm{\theta}}(\pairZ)}}_{\text{\rm \textcircled{\scriptsize 4}}} + \underbrace{H\left(q_{\paramEnc}(\pairZ | \pairX) \right)}_{\text{\rm \textcircled{\scriptsize 5}}},$$
where $H$ means the Shannon entropy. Moreover, from Proposition~\ref{prop: prior latent joint as closed form}, the part \text{\rm \textcircled{\scriptsize 4}} is decomposed into the summation of Eq.\eqref{eq: component eq1 of Dkl} and Eq.\eqref{eq: component eq2 of Dkl}. Finally, by applying Lemma~\ref{proposition: CrossEntropy} to \text{\rm \textcircled{\scriptsize 5}}, we obtain Eq.\eqref{eq: closed form Dkl with standard normal}.

\section{Numerical Experiments}
\label{sec:numerical experiments}
Toward a practical evaluation of our method, we conducted experiments on MNIST~\cite{lecun1998gradient} and Fashion MNIST~\cite{xiao2017fashion} (F-MNIST) datasets while keeping our focus on model robustness. 

The efficiency of our proposed model against adversarial attacks is highlighted in those experiments.
We report the corresponding details and results in the following.

\subsection{Experiment Setup}
\label{sec: ex-a}
\subsubsection{ Adversarial attacks } 
\label{subsubsec: Adversarial attacks}
An adversarial attack is some carefully crafted alternations of the input data,
designed to mislead models into yielding erroneous predictions or classifications~\cite{43405}.
In our study, we focus on small perturbations on the input data that can provoke substantial deviations in representations.
Given attackers an accessible encoder model $q_{\paramEnc}(\cdot | \firstX) $, and a specified attack budget $\|\bm{\epsilon}\|_p \le \delta$,
their goal is to induce the maximum change in model representations,
measured by some dissimilarity function $\Delta$: 
\begin{equation}    
\bm{\epsilon} = \arg \max_{\|\bm{\epsilon}\|_p \le \delta} \Delta \left[ q_{\paramEnc}(\cdot | \firstX), q_{\paramEnc}(\cdot | \firstX + \bm{\epsilon})  \right] .
\label{eq: adv-attack}
\end{equation}
In Eq.\eqref{eq: adv-attack}, we adopted $p = \infty$ hence the maximum norm $\|\cdot\|_{\infty}$. $\Delta$ takes Wasserstein distance or KL divergence.

Therefore, throughout our experiments, there are 2 types of adversarial attacks being considered in total.

To optimize Eq.\eqref{eq: adv-attack} w.r.t. $\bm{\epsilon}$, we apply the Projected Gradient Descent (PGD) algorithm with 50 iterations and stepsize $\delta/25$, maximizing either Wasserstein distance or KL divergence between the encoded representation posteriors.

\subsubsection{Experiment protocols}
\label{subsubsec: Experiment protocols}
Referring the experiment protocols in \cite{Cemgil2020Adversarially}, our experiments with adversarial attack proceeds as follows:

As the first step, we train the VAE independently of downstream tasks until model converges.
The encoder and the decoder are symmetric 3-layer MLPs with PReLU activations~\cite{he2015delving}, without normalization layers.
The model is trained with RAdam optimizer~\cite{Liu2020On} and learning rate 0.001.

We then freeze the encoder parameters, and extract representation $\firstZ_i$ for each training data $\firstX_i$. 
The mean given by the encoder will be the representation $\firstZ_i$:
\begin{equation}
   \bm{z}_i = \arg\max_{\firstZ \in \Real^d} \NormalPDF{\bz}{\bmu_{\firstX_i}}{\bSigma_{\firstX_i}}
   = \meanEnc{\firstX_i}.
\end{equation}

Secondly, following the common linear evaluation method, we train a simple linear classifier on the pairs of fixed representations $\{\bm{z}_i\}_{i=1}^n$ and corresponding labels $\{y_i\}_{i=1}^n$ via the cross-entropy loss.

Finally, we assess model robustness by generating adversarial examples for each test instance using Eq.\eqref{eq: adv-attack}.
We report their test-set classification accuracy under such attacks in our results.

\begin{table}[!t]
\caption{
Reconstruction performance results of proposed method and various baselines. 
Numbers indicate mean and standard derivations over 5 runs.
The best performance is in bold font. Results with $\delta=0$ in Figure~\ref{fig: expr} are reported in this table.
}
\centering
\scalebox{1}{
\begin{tabular}{ccccc}
    \toprule
    & model & Accuracy (\%) $\uparrow$ & MSE $\downarrow$ & FID $\downarrow$ \\
    \midrule
    \multirow{4}{*}{\rotatebox{90}{MNIST}}
        & VAE & 91.45  $\pm$ 0.26 & 862.114 $\pm$ 10.96& 21.766 $\pm$ 0.27 \\
        & VAE (noise) & 91.84 $\pm$ 0.45 & 916.383 $\pm$ \phantom{1}5.43& 23.990 $\pm$ 0.86  \\
        & SE & 91.44 $\pm$ 0.19 & 876.454 $\pm$ 10.82& 22.434 $\pm$ 0.70  \\
        & Ours & \textbf{92.49 $\pm$ 0.51} & \textbf{795.946 $\pm$ \phantom{1}6.01}& \textbf{20.636 $\pm$ 0.49} \\
    \midrule
    \multirow{4}{*}{\rotatebox{90}{F-MNIST}}
        & VAE & 76.33 $\pm$ 0.89 & 915.934 $\pm$ 10.69 & 50.338 $\pm$ 0.85 \\
        & VAE (noise) & 76.71 $\pm$ 0.58 & 953.350 $\pm$ \phantom{1}7.46 & 52.151 $\pm$ 0.78 \\
        & SE & 77.02 $\pm$ 0.87 & 963.828 $\pm$ 19.64 & 52.896 $\pm$ 1.36 \\
        & Ours & \textbf{77.20 $\pm$ 0.18} & \textbf{821.254 $\pm$ \phantom{1}4.84} & \textbf{45.751 $\pm$ 0.54} \\
    \bottomrule
\end{tabular}
}
%\vspace{1em}
\label{tab: ex2}
\end{table}

\subsection{Results and Discussions}
\label{sec: ex-b}
Following the protocols established in Section~\ref{sec: ex-a},
we compare our method with various baselines in practice, including vanilla and noise-augmented VAEs (for definitions of vanilla and noise-augmented, see (A) and (B) in Section~\ref{sec:introduction}) and SE~\cite{Cemgil2020Adversarially} on MNIST and F-MNIST datasets. In our method, the augmented pair $(\firstX, \secondX)$ in Eq.\eqref{eq:haochen data augmentation in original space} is constructed as $(\originalX, \originalX + \bm{\epsilon})$, where $\bm{\epsilon} \sim \mathcal{N}(\bm{0}, 0.05^2 \bm{I})$. We set the variance $\augVar$ in Definition~\ref{dfn:latent generating process} to $0.04^2 \bm{I}$ for F-MNIST and $0.01^2 \bm{I}$ for MNIST. Finally, we adopt cross-entropy metric for the first and second terms in Eq.\eqref{eq:likelihood decomposition via independence property} like the authors of~\cite{DBLP:journals/corr/KingmaW13}. See Appendix~\ref{append:details hyper prameter tuning} for details of hype-parameter tuning.
The results are shown in Figure~\ref{fig: expr} and Table~\ref{tab: ex2}.

As shown in the above results, for vanilla and noise-augmented VAEs, their improvements in robustness under adversarial attacks are limited.
Across datasets and different attacks, our method consistently outperforms SE in adversarial accuracy.
Especially, the performance of SE is highly impacted by strong adversarial attacks in F-MNIST, a relatively complex dataset, while our method is able to sustain its performance under such circumstances, offering robust representations.

In Table~\ref{tab: ex2}, we also report the reconstruction quality in pixel-wise Mean Square Error (MSE) as in~\cite{Cemgil2020Adversarially} and Fr\'echet Inception Distance (FID)~\cite{heusel2017gans}, which captures a deep distance between the ground-truth (clean) images and reconstructed ones. Although SE and noise-augmented VAEs suffer a slight decline in reconstruction fidelity—which can be attributed to their training on altered data and subsequent testing on clean data—our method appears immune to this trend. 

Finally, we emphasize that distance of the paired representation $(\originalZ, \firstZ)$ are well improved compared to VAE and VAE (noise) in Table~\ref{tab: verification of our hypothesis}, suggesting our hypothesis introduced in Section~\ref{sec:introduction} is correct.

\section{Conclusion and Future Work}
\label{sec:conclusion and future work}
Our robust VAE, RAVEN, was proposed based on a novel variational bound via our hypothesis. We empirically confirm that the hypothesis is correct by RAVEN's superior resilience against adversarial attacks. One of our future work is to further investigate our method theoretically and experimentally.

\section*{Acknowledgment}
TK was partially supported by JSPS KAKENHI Grant Number 17H00764, 19H04071, and 20H00576.

\appendix
\section{Lemmas and Propositions}
\label{append: useful lemmas and propositions}

\begin{lem} 
\label{lemma: inverseOfInverseSum}
Let $\bm{X}$ and $\bm{Y}$ be the positive definite symmetric matrix. Then, the following equation holds: $\left(\bm{X}^{-1} + \bm{Y}^{-1}\right)^{-1} = \bm{X} \left(\bm{X} + \bm{Y}\right)^{-1} \bm{Y} = \bm{Y} \left(\bm{X} + \bm{Y}\right)^{-1} \bm{X}$.
\end{lem}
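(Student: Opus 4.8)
The plan is to prove the identity purely algebraically by factoring the sum $\bm{X}^{-1}+\bm{Y}^{-1}$ in two symmetric ways and then inverting, relying only on the reversal rule for the inverse of a product. No spectral decomposition is needed beyond what is required to guarantee invertibility, so the argument is elementary and short.

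First I would dispense with the hypotheses: positive definiteness is used only to ensure that every inverse appearing in the statement is well-defined. Indeed $\bm{X}$ and $\bm{Y}$ are invertible by assumption, and $\bm{X}+\bm{Y}$ is again positive definite as a sum of positive definite matrices, hence invertible as well; consequently both $\bm{X}^{-1}+\bm{Y}^{-1}$ and $(\bm{X}+\bm{Y})^{-1}$ exist, and the expressions on both sides make sense.

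The key algebraic step is to expand the product $\bm{X}^{-1}(\bm{X}+\bm{Y})\bm{Y}^{-1}$ and verify that it equals $\bm{Y}^{-1}+\bm{X}^{-1}$, using $\bm{X}^{-1}\bm{X}\bm{Y}^{-1}=\bm{Y}^{-1}$ and $\bm{X}^{-1}\bm{Y}\bm{Y}^{-1}=\bm{X}^{-1}$. This gives the factorization $\bm{X}^{-1}+\bm{Y}^{-1}=\bm{X}^{-1}(\bm{X}+\bm{Y})\bm{Y}^{-1}$. Taking inverses of both sides and applying the reversal rule $(\bm{A}\bm{B}\bm{C})^{-1}=\bm{C}^{-1}\bm{B}^{-1}\bm{A}^{-1}$ then yields $(\bm{X}^{-1}+\bm{Y}^{-1})^{-1}=\bm{Y}(\bm{X}+\bm{Y})^{-1}\bm{X}$, which is one of the two claimed forms.

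Finally I would repeat the identical computation with the factorization written the other way, namely $\bm{X}^{-1}+\bm{Y}^{-1}=\bm{Y}^{-1}(\bm{X}+\bm{Y})\bm{X}^{-1}$, and invert to obtain $\bm{X}(\bm{X}+\bm{Y})^{-1}\bm{Y}$; chaining the two results establishes the full equality. I do not expect any genuine obstacle, as the manipulations are routine. The only point demanding care is non-commutativity, which forces the correct ordering of factors when the reversal rule is applied; I would also note that the stated hypothesis is slightly stronger than necessary, since mere invertibility of $\bm{X}$, $\bm{Y}$, and $\bm{X}+\bm{Y}$ already suffices.
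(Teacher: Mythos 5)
Your proof is correct and essentially the same as the paper's: both rest on the factorization $\bm{X}^{-1}+\bm{Y}^{-1}=\bm{X}^{-1}\left(\bm{X}+\bm{Y}\right)\bm{Y}^{-1}$ together with the reversal rule for inverting a product, the only difference being that the paper runs the computation in the opposite direction (inverting the middle term and expanding, rather than factoring and then inverting). Your added remarks---verifying both claimed forms explicitly and noting that mere invertibility of $\bm{X}$, $\bm{Y}$, and $\bm{X}+\bm{Y}$ suffices---are minor refinements of the same argument.
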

\begin{proof}
    \begin{align*}
        (\text{Middle term})^{-1}
            &= \left(\bm{X} \left(\bm{X} + \bm{Y}\right)^{-1} \bm{Y}\right)^{-1}\\
            &= \bm{Y}^{-1} \left( \bm{X} + \bm{Y} \right) \bm{X}^{-1}\\
            &= \bm{Y}^{-1} \bm{X} \bm{X}^{-1} + \bm{Y}^{-1} \bm{Y} \bm{X}^{-1}\\
            &= \bm{Y}^{-1} + \bm{X}^{-1}\\
            &= (\text{Left term})
    \end{align*}
\end{proof}

\begin{prop}
\label{proposition: productOfNormalPDF}
The product of the two density functions $\Normal{\muA}{\sigmaA}, \Normal{\muB}{\sigmaB}$ can be written as 
$
        \NormalPDF{\vecX}{\muA}{\sigmaA} \cdot \NormalPDF{\vecX}{\muB}{\sigmaB}
            = \NormalPDF{\vecX}{\muC}{\sigmaC} \cdot \NormalPDF{\vecZero}{\muA - \muB}{\sigmaA + \sigmaB},
$
where
$
        \sigmaC = \left( \sigmaA^{-1} + \sigmaB^{-1} \right)^{-1},
        \muC = \sigmaC \left( \sigmaA^{-1} \muA + \sigmaB^{-1} \muB \right)
$.  
\end{prop}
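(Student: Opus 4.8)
The plan is to prove the factorization by direct comparison of the two sides after substituting the explicit Gaussian density from Eq.\eqref{eq: multivariate normal distribution}, separating the argument into its exponential (quadratic-form) part and its normalizing (determinant) part, and matching them independently. First I would add the two exponents on the left-hand side, $\quadForm{\sigmaA^{-1}}{(\vecX - \muA)} + \quadForm{\sigmaB^{-1}}{(\vecX - \muB)}$, and regroup by powers of $\vecX$. The quadratic coefficient is $\sigmaA^{-1} + \sigmaB^{-1} = \sigmaC^{-1}$ by the definition of $\sigmaC$, and the linear coefficient is $\sigmaA^{-1}\muA + \sigmaB^{-1}\muB = \sigmaC^{-1}\muC$ by the definition of $\muC$. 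Completing the square in $\vecX$ against $\sigmaC^{-1}$ then yields $\quadForm{\sigmaC^{-1}}{(\vecX - \muC)}$ plus a residual term that does not depend on $\vecX$; the first piece is exactly the exponent of $\NormalPDF{\vecX}{\muC}{\sigmaC}$.

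The main obstacle, and really the only nontrivial algebraic step, is verifying that the residual constant equals $\quadForm{(\sigmaA + \sigmaB)^{-1}}{(\muA - \muB)}$, so that it reproduces the exponent of the second factor $\NormalPDF{\vecZero}{\muA - \muB}{\sigmaA + \sigmaB}$ on the right. Concretely I must show $\quadForm{\sigmaA^{-1}}{\muA} + \quadForm{\sigmaB^{-1}}{\muB} - \quadForm{\sigmaC^{-1}}{\muC} = \quadForm{(\sigmaA + \sigmaB)^{-1}}{(\muA - \muB)}$. I would substitute $\muC = \sigmaC(\sigmaA^{-1}\muA + \sigmaB^{-1}\muB)$ so that $\quadForm{\sigmaC^{-1}}{\muC} = (\sigmaA^{-1}\muA + \sigmaB^{-1}\muB)^{\top}\sigmaC(\sigmaA^{-1}\muA + \sigmaB^{-1}\muB)$, expand the cross terms, and then invoke Lemma~\ref{lemma: inverseOfInverseSum}, which rewrites $\sigmaC = (\sigmaA^{-1} + \sigmaB^{-1})^{-1}$ as $\sigmaA(\sigmaA + \sigmaB)^{-1}\sigmaB = \sigmaB(\sigmaA + \sigmaB)^{-1}\sigmaA$. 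This rewriting is what collapses the $\muA$-$\muA$, $\muB$-$\muB$, and cross contributions into the single quadratic form in $\muA - \muB$.

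Finally I would match the normalization constants. The left-hand side carries the factor $\determinant{2\pi\sigmaA}^{-1/2}\determinant{2\pi\sigmaB}^{-1/2}$, while the right-hand side carries $\determinant{2\pi\sigmaC}^{-1/2}\determinant{2\pi(\sigmaA + \sigmaB)}^{-1/2}$. Using the same identity $\sigmaC = \sigmaA(\sigmaA + \sigmaB)^{-1}\sigmaB$ from Lemma~\ref{lemma: inverseOfInverseSum} and multiplicativity of the determinant gives $\determinant{\sigmaC} = \determinant{\sigmaA}\determinant{\sigmaA + \sigmaB}^{-1}\determinant{\sigmaB}$, hence $\determinant{\sigmaC}\determinant{\sigmaA + \sigmaB} = \determinant{\sigmaA}\determinant{\sigmaB}$, and the two normalizations coincide (the common powers of $2\pi$ cancel since both sides are products of two $d$-dimensional densities). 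Combining the matched exponents with the matched normalizations establishes the claimed factorization.
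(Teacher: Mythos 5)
Your proposal is correct and takes essentially the same route as the paper's own proof: completing the square in $\vecX$ to extract $\NormalPDF{\vecX}{\muC}{\sigmaC}$, reducing the residual constant $\quadForm{\sigmaA^{-1}}{\muA} + \quadForm{\sigmaB^{-1}}{\muB} - \quadForm{\sigmaC^{-1}}{\muC}$ to $\quadForm{\left(\sigmaA + \sigmaB\right)^{-1}}{(\muA - \muB)}$ via Lemma~\ref{lemma: inverseOfInverseSum}, and matching the normalizing constants through $\determinant{\sigmaC} = \determinant{\sigmaA}\determinant{\sigmaB}/\determinant{\sigmaA + \sigmaB}$. All three steps align with the paper's argument, and I see no gaps.
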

\begin{proof}
Let
$
        P = \NormalPDF{\vecX}{\muA}{\sigmaA} \cdot \NormalPDF{\vecX}{\muB}{\sigmaB}
$, i.e.,
$
P = \frac{1}{\sqrt{\determinant{ 2 \pi \sigmaA} \determinant{2 \pi \sigmaB}}} \exp \left( 
    -\frac{1}{2} A
    % \underbrace{  \bigl( \quadForm{\sigmaA^{-1}}{(\vecX - \muA)} + \quadForm{\sigmaB^{-1}}{(\vecX - \muB)} \bigr)
    % }_{= \: A}  
    \right),
$
where
$
    A:= \quadForm{\sigmaA^{-1}}{(\vecX - \muA)} + \quadForm{\sigmaB^{-1}}{(\vecX - \muB)}.
$
Using Lemma~\ref{lemma: inverseOfInverseSum},
    \begin{align*}
        A
            &= \quadForm{\underbrace{\left( \sigmaA^{-1} + \sigmaB^{-1} \right)}_{=\sigmaC^{-1}} }{\vecX} - 2 \left( \muA^{\top} \sigmaA^{-1} + \muB^{\top} \sigmaB^{-1} \right) \vecX + \left(\quadForm{\sigmaA^{-1}}{\muA} + \quadForm{\sigmaB^{-1}}{\muB} \right)\\
            &= \quadForm{\sigmaC^{-1}}{\vecX} - 2 \underbrace{\left( \muA^{\top} \sigmaA^{-1} + \muB^{\top} \sigmaB^{-1} \right) \sigmaC}_{=\muC^{\top}} \sigmaC^{-1} \vecX + \left(\quadForm{\sigmaA^{-1}}{\muA} + \quadForm{\sigmaB^{-1}}{\muB} \right)\\
            &= \quadForm{\sigmaC^{-1}}{\vecX} - 2 \muC^{\top} \sigmaC^{-1} \vecX + \left(\quadForm{\sigmaA^{-1}}{\muA}+ \quadForm{\sigmaB^{-1}}{\muB} \right)\\
            &= \underbrace{\quadForm{\sigmaC^{-1}}{(\vecX - \muC)}}_{\text{Exponential part of\;} \NormalPDF{\vecX}{\muC}{\sigmaC}} + \underbrace{\bigl(\quadForm{\sigmaA^{-1}}{\muA} + \quadForm{\sigmaB^{-1}}{\muB} - \quadForm{\sigmaC^{-1}}{\muC}\bigr)}_{:= B}.
    \end{align*}
    Thus,
    \begin{align*}
        P
            &= \sqrt{\frac{\determinant{2 \pi \sigmaC}}{\determinant{2 \pi \sigmaA} \determinant{2 \pi \sigmaB}}} \cdot \exp\left(- \frac{1}{2} B\right) \times \underbrace{ \frac{1}{\sqrt{\determinant{2 \pi \sigmaC}}} \exp \left( - \frac{1}{2} \quadForm{\sigmaC^{-1}}{(\vecX - \muC)} \right) }_{=\NormalPDF{\vecX}{\muC}{\sigmaC}} \\
            &= \sqrt{\frac{\determinant{2 \pi \sigmaC}}{\determinant{2 \pi \sigmaA} \determinant{2 \pi \sigmaB}}} \exp\left(-\frac{1}{2} B \right) \cdot \NormalPDF{\vecX}{\muC}{\sigmaC}.
    \end{align*}
    Moreover, 
    \begin{align*}
        B 
            &= \quadForm{\sigmaA^{-1}}{\muA} + \quadForm{\sigmaB^{-1}}{\muB} - \quadForm{\sigmaC^{-1}}{\muC}\\
            &= \quadForm{\sigmaA^{-1}}{\muA} + \quadForm{\sigmaB^{-1}}{\muB} - \quadForm{ \sigmaC \sigmaC^{-1} \sigmaC}{\left( \sigmaA^{-1} \muA + \sigmaB^{-1} \muB \right)}\\
            &= \quadForm{\sigmaA^{-1}}{\muA} + \quadForm{\sigmaB^{-1}}{\muB} - \quadForm{ \sigmaC}{\left( \sigmaA^{-1} \muA + \sigmaB^{-1} \muB \right)}\\
            &= \quadForm{ \underbrace{\left(\sigmaA^{-1} - \sigmaA^{-1} \sigmaC \sigmaA^{-1} \right)}_{=:C} }{\muA} + \quadForm{ \underbrace{\left(\sigmaB^{-1} - \sigmaB^{-1} \sigmaC \sigmaB^{-1} \right)}_{=:D} }{\muB} - 2 \muA^{\top} \underbrace{\sigmaA^{-1} \sigmaC \sigmaB^{-1}}_{=:E} \muB,\\
            &\\
        C
            &= \sigmaA^{-1} - \sigmaA^{-1} \underline{\sigmaC} \sigmaA^{-1}\\
            &= \sigmaA^{-1} - \sigmaA^{-1} \underline{\sigmaA \left( \sigmaA + \sigmaB \right)^{-1} \sigmaB} \sigmaA^{-1}\\
            &= \sigmaA^{-1} - \left( \sigmaA + \sigmaB \right)^{-1} \sigmaB \sigmaA^{-1}\\
            &= \left( \sigmaA + \sigmaB \right)^{-1} \left(\left( \sigmaA + \sigmaB \right) \sigmaA^{-1} - \sigmaB \sigmaA^{-1} \right)\\
            &= \left( \sigmaA + \sigmaB \right)^{-1} \left( \IdentityMatrix + \sigmaB \sigmaA^{-1} - \sigmaB \sigmaA^{-1}\right)\\
            &= \left( \sigmaA + \sigmaB \right)^{-1},\\
            &\\
        D
            &= \sigmaB^{-1} - \sigmaB^{-1} \sigmaC \sigmaB^{-1}\\
            &= \hspace{30pt} \vdots \\
            &= \left( \sigmaA + \sigmaB \right)^{-1},\\
            &\\
        E
            &= \sigmaA^{-1} \underline{\sigmaC} \sigmaB^{-1}\\
            &= \sigmaA^{-1} \underline{\sigmaA \left( \sigmaA + \sigmaB \right)^{-1} \sigmaB} \sigmaB^{-1}\\
            &= \left( \sigmaA + \sigmaB \right)^{-1}.
    \end{align*}
    Therefore, 
    \begin{align*}
        B 
            &= \quadForm{\left( \sigmaA + \sigmaB \right)^{-1}}{\muA} + \quadForm{\left( \sigmaA + \sigmaB \right)^{-1}}{\muB} - 2 \muA^{\top} \left( \sigmaA + \sigmaB \right)^{-1} \muB\\
            &= \underbrace{\quadForm{\left( \sigmaA + \sigmaB \right)^{-1}}{(\muA - \muB)}}_{\text{Exponential part of\;}\NormalPDF{\vecZero}{\muA - \muB}{\sigmaA + \sigmaB}}.
    \end{align*}
    Since
    $
        \determinant{\sigmaC} = \determinant{\sigmaA \left( \sigmaA + \sigmaB \right)^{-1} \sigmaB} = \frac{\determinant{\sigmaA}\determinant{\sigmaB}}{\determinant{\sigmaA + \sigmaB}},
    $
    \begin{align*}
        P
            &= \sqrt{\frac{\determinant{2 \pi \sigmaC}}{\determinant{2 \pi \sigmaA} \determinant{2 \pi \sigmaB}}} \exp\left(-\frac{1}{2} B \right) \cdot \NormalPDF{\vecX}{\muC}{\sigmaC}\\
            &= \underbrace{\frac{\exp \left( -\frac{1}{2} \quadForm{\sigmaA + \sigmaB}{(\muA - \muB)} \right)}{\sqrt{\determinant{2\pi \left(\sigmaA + \sigmaB\right)}}} }_{=\NormalPDF{\vecZero}{\muA - \muB}{\sigmaA + \sigmaB}} \times \NormalPDF{\vecX}{\muC}{\sigmaC}\\
            &= \NormalPDF{\vecZero}{\muA - \muB}{\sigmaA + \sigmaB} \cdot \NormalPDF{\vecX}{\muC}{\sigmaC}\\
    \end{align*}
\end{proof}
\subsection{Proof of Lemma~\ref{proposition: CrossEntropy}}
\label{append:proof lemma2}
\begin{proof}
    From Lemma~\ref{lemma: quadOfMeanCov}, 
    \begin{align*}
         &\Exp{\bx \sim \Normal{\bmu_{a}}{\bSigma_{a}}}{-\log \NormalPDF{\bx}{\bmu_{b}}{\bSigma_{b}} }\\
            &= 
            \mathbb{E}_{\bx \sim \Normal{\bmu_{a}}{\bSigma_{a}}}\Biggl[
            \frac{1}{2}\quadForm{\bSigma_{b}^{-1}}{(\bx - \bmu_{b})} +  \frac{1}{2}\log \determinant{2 \pi \bSigma_{b}} 
            \Biggr]\\
            &= \frac{1}{2} \Bigl[ \, \Tr \left( \bSigma_{b}^{-1} \bSigma_{a} \right) + \quadForm{\bSigma_{b}^{-1}}{( \bmu_{a} - \bmu_{b})} +\log \left((2\pi)^{d} \cdot \determinant{\bSigma_{b}} \right) \, \Bigr]\\
            &= \frac{1}{2} \Bigl[ \, \Tr \left( \bSigma_{a} \bSigma_{b}^{-1} \right) + \quadForm{\bSigma_{b}^{-1}}{( \bmu_{a} - \bmu_{b})} +\log \determinant{\bSigma_{b}} + d \log 2\pi \, \Bigr].
    \end{align*}
    Let $\bmu_{b} = \bmu_{a}$ and $\bSigma_{b} = \bSigma_{a}$, and then the Shannon entropy is
    \begin{align*} 
        H &= \frac{1}{2} \Bigl\{ \Tr(\bSigma_{a} \bSigma_{a}^{-1}) + \quadForm{\bSigma^{-1}}{(\bmu_{a} - \bmu_{a})} + \log \determinant{\bSigma_{a}} + d \log 2\pi \Bigr\}\\
        &= \frac{1}{2} \left\{ \Tr (\IdentityMatrix) + \log \determinant{\bSigma_{a}} + d \log 2\pi \right\}\\
        &= \frac{1}{2} \left\{ d + d\log 2 \pi + \log \determinant{\bSigma_{a}}\right\}.
    \end{align*}
\end{proof}

\subsection{Proof of Proposition~\ref{prop: prior latent joint as closed form}}
\label{append:proof of closed form based on N(0,I) p_tilde_z}
\begin{proof}
    From Proposition~\ref{proposition: productOfNormalPDF},
    \begin{align*}
        &a(\firstZ|\originalZ) \cdot a(\secondZ|\originalZ)\\
            &= \NormalPDF{\firstZ}{\originalZ}{\augVar} \cdot \NormalPDF{\secondZ}{\originalZ}{\augVar}\\
            &= \NormalPDF{\originalZ}{\firstZ}{\augVar} \cdot \NormalPDF{\originalZ}{\secondZ}{\augVar}\\
            &= \NormalPDF{\originalZ}{\frac{\firstZ + \secondZ}{2}}{\frac{1}{2} \augVar}\cdot \NormalPDF{\vecZero}{\firstZ - \secondZ}{2 \augVar }, 
    \end{align*}
    and then, 
    \begin{equation*}
        \begin{split}
            &\NormalPDF{\originalZ}{\vecZero}{\IdentityMatrix} \cdot \NormalPDF{\originalZ}{\frac{\firstZ + \secondZ}{2}}{\frac{1}{2} \augVar} \\
            &= \NormalPDF{\originalZ}{\mupZ}{\sigmapZ} \cdot \NormalPDF{\vecZero}{\frac{\firstZ + \secondZ}{2} }{\IdentityMatrix + \frac{1}{2} \augVar},
        \end{split}
    \end{equation*}
    where $\sigmapZ=\left( \IdentityMatrix + 2 \augVar^{-1} \right)^{-1}\;\text{and}\;\mupZ=\sigmapZ \augVar^{-1}\left(\firstZ + \secondZ\right)$.
    Therefore,
    \begin{align*}
        p_{\paramGMM}(\pairZ)
            &= p_{\paramGMM}(\firstZ, \secondZ)\\
            &= \int a(\firstZ|\originalZ) a(\secondZ|\originalZ) p_{\paramGMM}(\originalZ) \dx{\originalZ}\\
            &= \int \NormalPDF{\originalZ}{\vecZero}{\IdentityMatrix} a(\firstZ | \originalZ) \cdot a(\secondZ | \originalZ) \: \dx{\originalZ}\\
            &=  \int \NormalPDF{\originalZ}{\mupZ}{\sigmapZ} \NormalPDF{\vecZero}{\frac{\firstZ + \secondZ}{2} }{\IdentityMatrix + \frac{1}{2} \: \augVar} \times\NormalPDF{\vecZero}{\firstZ - \secondZ}{2 \augVar} \dx{\originalZ}\\
            &= \: \NormalPDF{\vecZero}{\frac{\firstZ + \secondZ}{2} }{\IdentityMatrix + \frac{1}{2} \augVar} \NormalPDF{\vecZero}{\firstZ - \secondZ}{2 \augVar } \times\underbrace{ \int \NormalPDF{\originalZ}{\mupZ}{\sigmapZ} \dx{\originalZ}}_{=1} \\
            &= \NormalPDF{\vecZero}{\firstZ - \secondZ}{2 \augVar}  \: \NormalPDF{\vecZero}{\frac{\firstZ + \secondZ}{2} }{\IdentityMatrix + \frac{1}{2} \augVar} \\
            &= \NormalPDF{\vecZero}{\firstZ - \secondZ}{2 \augVar}  \, \NormalPDF{\frac{\firstZ + \secondZ}{2}}{\vecZero}{\IdentityMatrix + \frac{1}{2} \augVar}.
    \end{align*}
\end{proof}

\subsection{Proof of Proposition~\ref{prop: mean vector terms}}
\label{append:proof of mean vectors terms}
By Corollary 1.7.1 in~\cite{schott2016matrix},
$
    \left(2 \boldsymbol{I}+\boldsymbol{\Sigma}_{\text {aug }}\right)^{-1}=\boldsymbol{\Sigma}_{\text {aug }}^{-1}-\boldsymbol{\Sigma}_{\text {aug }}^{-1}\left(2^{-1} \boldsymbol{I}+\boldsymbol{\Sigma}_{\text {aug }}^{-1}\right)^{-1} \boldsymbol{\Sigma}_{\text {aug }}^{-1}
$, and note that
\begin{equation*}
 \left(\boldsymbol{\mu}_{\boldsymbol{x}}-\boldsymbol{\mu}_{\boldsymbol{x}^{\prime}}\right)^{\top} \boldsymbol{\Sigma}_{\text {aug }}^{-1}\left(\boldsymbol{\mu}_{\boldsymbol{x}}\!-\!\boldsymbol{\mu}_{\boldsymbol{x}^{\prime}}\right)\!+\!\left(\boldsymbol{\mu}_{\boldsymbol{x}}\!+\!\boldsymbol{\mu}_{\boldsymbol{x}^{\prime}}\right)^{\top} \boldsymbol{\Sigma}_{\text {aug }}^{-1}\left(\boldsymbol{\mu}_{\boldsymbol{x}}+\boldsymbol{\mu}_{\boldsymbol{x}^{\prime}}\right) =2\left(\boldsymbol{\mu}_{\boldsymbol{x}}^{\top} \boldsymbol{\Sigma}_{\text {aug }}^{-1} \boldsymbol{\mu}_{\boldsymbol{x}}+\boldsymbol{\mu}_{\boldsymbol{x}^{\prime}}^{\top} \boldsymbol{\Sigma}_{\text {aug }}^{-1} \boldsymbol{\mu}_{\boldsymbol{x}^{\prime}}\right).
\end{equation*}
The result follows from the above two equations.

\section{Gaussian Mixture Case}
\label{append: GMM case}

\begin{prop}
\label{prop: prior latent joint as closed form based on GMM}
Let us consider the case that $p(\originalZ)$ in Definition~\ref{dfn:latent generating process} is defined by a Gaussian Mixture Model (GMM): $p_{\paramGMM}(\originalZ) = \sum_{c=1}^{C} \gmmCoef{c} \, \NormalPDF{\originalZ}{\gmmMean{c}}{\gmmVar{c}}$. 
For the definition of $\NormalPDF{\originalZ}{\gmmMean{c}}{\gmmVar{c}}$, see Eq.\eqref{eq: multivariate normal distribution}, and $\pi_c$ is the $c$-th wight satisfying $\pi_c \in [0,1]\;\text{and}\;\sum_{c=1}^C \pi_c =1$. The set $\Set{\left( \gmmCoef{c}, \gmmMean{c}, \gmmVar{c}\right)}_{c=1}^{C}$ is trainable, and it is denoted by $\paramGMM$.
Then, the prior $p(\pairZ)$ in Definition~\ref{dfn:latent generating process}, takes on the form of
\begin{equation} 
\label{equation:augmented prior with GMM}
\begin{split}
    % &p_{\paramGMM}(\pairZ) \\
    \NormalPDF{\vecZero}{\firstZ - \secondZ}{2 \augVar} \sum_{c = 1}^{C} \gmmCoef{c} \, \NormalPDF{\frac{\firstZ + \secondZ}{2}}{\gmmMean{c}}{\gmmVar{c} + \frac{1}{2} \augVar}.
\end{split}
\end{equation}
\end{prop}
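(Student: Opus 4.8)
The plan is to follow the proof of Proposition~\ref{prop: prior latent joint as closed form} almost verbatim, replacing the standard Gaussian prior $\NormalPDF{\originalZ}{\vecZero}{\IdentityMatrix}$ by the mixture $p_{\paramGMM}(\originalZ) = \sum_{c=1}^{C}\gmmCoef{c}\,\NormalPDF{\originalZ}{\gmmMean{c}}{\gmmVar{c}}$ and then exploiting linearity of the integral over the components. The only structural ingredient is Proposition~\ref{proposition: productOfNormalPDF}, which I expect to invoke twice.

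First I would rewrite the augmentation product exactly as in the standard-prior case. Since $a(\firstZ | \originalZ) = \NormalPDF{\firstZ}{\originalZ}{\augVar}$ and $a(\secondZ | \originalZ) = \NormalPDF{\secondZ}{\originalZ}{\augVar}$ are symmetric in their first two arguments, they equal $\NormalPDF{\originalZ}{\firstZ}{\augVar}\cdot\NormalPDF{\originalZ}{\secondZ}{\augVar}$. Applying Proposition~\ref{proposition: productOfNormalPDF} with $\muA = \firstZ$, $\muB = \secondZ$, and $\sigmaA = \sigmaB = \augVar$ gives
\[
    a(\firstZ | \originalZ)\,a(\secondZ | \originalZ)
    = \NormalPDF{\originalZ}{\frac{\firstZ + \secondZ}{2}}{\frac{1}{2}\augVar}\cdot\NormalPDF{\vecZero}{\firstZ - \secondZ}{2\augVar},
\]
where the second factor does not depend on $\originalZ$ and can therefore be pulled outside the integral in Eq.\eqref{eq:haochen data augmentation in latent space}.

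Next I would substitute the mixture and use linearity to obtain
\[
    p(\pairZ)
    = \NormalPDF{\vecZero}{\firstZ - \secondZ}{2\augVar}\sum_{c=1}^{C}\gmmCoef{c}\int \NormalPDF{\originalZ}{\frac{\firstZ + \secondZ}{2}}{\frac{1}{2}\augVar}\,\NormalPDF{\originalZ}{\gmmMean{c}}{\gmmVar{c}}\,\dx{\originalZ}.
\]
For each component I would apply Proposition~\ref{proposition: productOfNormalPDF} a second time, now to the product of $\NormalPDF{\originalZ}{\frac{\firstZ + \secondZ}{2}}{\frac{1}{2}\augVar}$ and $\NormalPDF{\originalZ}{\gmmMean{c}}{\gmmVar{c}}$. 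This splits off the factor $\NormalPDF{\vecZero}{\frac{\firstZ + \secondZ}{2} - \gmmMean{c}}{\gmmVar{c} + \frac{1}{2}\augVar}$ (the ``$\muA - \muB$'' term, with summed covariances), multiplied by a single Gaussian density in $\originalZ$ whose integral over $\originalZ$ equals one. Rewriting the zero-mean factor as $\NormalPDF{\frac{\firstZ + \secondZ}{2}}{\gmmMean{c}}{\gmmVar{c} + \frac{1}{2}\augVar}$ and collecting terms yields exactly Eq.\eqref{equation:augmented prior with GMM}.

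I expect no genuine obstacle beyond bookkeeping. The only point requiring care is matching the covariance $\gmmVar{c} + \frac{1}{2}\augVar$ emitted by the second application of Proposition~\ref{proposition: productOfNormalPDF}, and noting that the residual density integrates to one independently of the component index $c$, so the weights $\gmmCoef{c}$ pass through untouched and the mixture structure is preserved. As a consistency check, setting $C = 1$, $\gmmCoef{1} = 1$, $\gmmMean{1} = \vecZero$, and $\gmmVar{1} = \IdentityMatrix$ recovers Proposition~\ref{prop: prior latent joint as closed form}.
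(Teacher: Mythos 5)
Your proposal is correct and follows the paper's own proof essentially verbatim: both arguments invoke Proposition~\ref{proposition: productOfNormalPDF} twice---first to merge the two augmentation kernels into the midpoint Gaussian $\NormalPDF{\originalZ}{\frac{\firstZ + \secondZ}{2}}{\frac{1}{2}\augVar}$ times the $\originalZ$-free factor $\NormalPDF{\vecZero}{\firstZ - \secondZ}{2\augVar}$, then componentwise against each $\NormalPDF{\originalZ}{\gmmMean{c}}{\gmmVar{c}}$---after which the residual Gaussian in $\originalZ$ integrates to one and the weights $\gmmCoef{c}$ pass through by linearity. Your choice to pull the $\originalZ$-free factor outside the integral before the second application, rather than carrying it along as the paper does, is a purely cosmetic reordering.
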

\begin{proof}
    From Proposition~\ref{proposition: productOfNormalPDF},
    \begin{align*}
        &a(\firstZ|\originalZ) \cdot a(\secondZ|\originalZ)\\
            &= \NormalPDF{\firstZ}{\originalZ}{\augVar} \cdot \NormalPDF{\secondZ}{\originalZ}{\augVar}\\
            &= \NormalPDF{\originalZ}{\firstZ}{\augVar} \cdot \NormalPDF{\originalZ}{\secondZ}{\augVar}\\
            &= \NormalPDF{\originalZ}{\frac{\firstZ + \secondZ}{2}}{\frac{1}{2} \augVar} \cdot \NormalPDF{\vecZero}{\firstZ - \secondZ}{2 \augVar },
    \end{align*}
    and then,
    \begin{equation*}
        \begin{split}
            &\NormalPDF{\originalZ}{\gmmMean{c}}{\gmmVar{c}} \cdot \NormalPDF{\originalZ}{\frac{\firstZ + \secondZ}{2}}{\frac{1}{2} \augVar} \\
            &= \NormalPDF{\originalZ}{\mupZ}{\sigmapZ} \cdot \NormalPDF{\vecZero}{\frac{\firstZ + \secondZ}{2} - \muC}{\sigmaC + \frac{1}{2} \augVar},
        \end{split}
    \end{equation*}
    where $\sigmapZ = \left( \gmmVar{c}^{-1} + 2 \augVar^{-1} \right)^{-1}$ and $
        \mupZ = \sigmapZ \left( \gmmVar{c}^{-1} \gmmMean{c} + \augVar^{-1}\left(\firstZ + \secondZ\right) \right)$.
    Therefore,
    \begin{align*}
        p_{\paramGMM}(\pairZ)
            &= p_{\paramGMM}(\firstZ, \secondZ)\\
            &= \int a(\firstZ|\originalZ) a(\secondZ|\originalZ) p_{\paramGMM}(\originalZ) \dx{\originalZ}\\
            &= \sum_{c=1}^{C} \gmmCoef{c} \int \NormalPDF{\originalZ}{\gmmMean{c}}{\gmmVar{c}} a(\firstZ | \originalZ) \cdot a(\secondZ | \originalZ) \: \dx{\originalZ}\\
            &= \sum_{c = 1}^{C} \gmmCoef{c} \int \NormalPDF{\originalZ}{\mupZ}{\sigmapZ} \times\NormalPDF{\vecZero}{\frac{\firstZ + \secondZ}{2} - \gmmMean{c}}{\gmmVar{c} + \frac{1}{2} \: \augVar} \\
            &\quad\quad\quad\quad\quad\quad\quad\quad\quad\times\NormalPDF{\vecZero}{\firstZ - \secondZ}{2 \augVar} \dx{\originalZ}\\
            &= \sum_{c = 1}^{C} \gmmCoef{c} \: \NormalPDF{\vecZero}{\frac{\firstZ + \secondZ}{2} - \gmmMean{c}}{\gmmVar{c} + \frac{1}{2} \augVar} \\
            &\quad\quad\quad\quad\quad\quad\quad\quad\quad\times\NormalPDF{\vecZero}{\firstZ - \secondZ}{2 \augVar } \times\underbrace{ \int \NormalPDF{\originalZ}{\mupZ}{\sigmapZ} \dx{\originalZ}}_{=1} \\
            &= \NormalPDF{\vecZero}{\firstZ - \secondZ}{2 \augVar} \times\sum_{i = 1}^{C} \pi_{i} \: \NormalPDF{\vecZero}{\frac{\firstZ + \secondZ}{2} - \gmmMean{c} }{\gmmVar{c} + \frac{1}{2} \augVar} \\
            &= \NormalPDF{\vecZero}{\firstZ - \secondZ}{2 \augVar} \times\sum_{c=1}^{C} \gmmCoef{c} \, \NormalPDF{\frac{\firstZ + \secondZ}{2}}{\gmmMean{c}}{\gmmVar{c} + \frac{1}{2} \augVar}.
    \end{align*}
\end{proof}

\begin{thm}
\label{theorem: augmented ELBO via GMM}
Set Eq.\eqref{equation:augmented prior with GMM} to $p(\Tilde{\bm{z}})$ of Eq.\eqref{eq:elbo paired joint}, then
the following equation holds:
\begin{equation*}
    \begin{aligned}
        &\mathcal{L}\left(\Bar{\firstX}; \bm{\phi}, \bm{\theta}\right)\\
        &= \Exp{q_{\paramEnc}(\pairZ | \pairX)}{\log p_{\paramDec}(\pairX | \pairZ)} + \Exp{q_{\paramEnc}(\pairZ | \pairX)}{\log \sum_{c = 1}^{C} \gmmCoef{c} \: \NormalPDF{\frac{\firstZ + \secondZ}{2}}{\gmmMean{c}}{\gmmVar{c} + \frac{1}{2}\augVar} } \\
         &\;\;\;\;\; - \frac{1}{4} \Bigl\{ \Tr{\left(\augVar^{-1}(\varEnc{\firstX} + \varEnc{\secondX})\right)} + \quadForm{\augVar^{-1}}{(\meanEnc{\firstX} - \meanEnc{\secondX})} \Bigr\}\\
         &\;\;\;\;\;\; + \frac{1}{2} \left(\log \determinant{\varEnc{\firstX} } + \log \determinant{\varEnc{\secondX}} \right) + d + \frac{d}{2} \log \pi- \frac{1}{2} \log \determinant{\augVar}.
    \end{aligned}
\end{equation*}
\end{thm}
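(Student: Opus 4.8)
The plan is to follow the same route as the proof sketch for Theorem~\ref{thm: proposed variational bound}: split the regularization (KL) term of Eq.\eqref{eq:elbo paired joint} into a cross-entropy against the prior plus the Shannon entropy of the posterior, and then exploit the factorized closed form of the GMM prior supplied by Proposition~\ref{prop: prior latent joint as closed form based on GMM}. First I would write $-\KLDiv{q_{\paramEnc}(\pairZ | \pairX)}{p(\pairZ)} = \Exp{q_{\paramEnc}(\pairZ | \pairX)}{\log p(\pairZ)} + H\!\left(q_{\paramEnc}(\pairZ | \pairX)\right)$, carrying the reconstruction term $\Exp{q_{\paramEnc}(\pairZ | \pairX)}{\log p_{\paramDec}(\pairX | \pairZ)}$ through unchanged so that it furnishes the first summand of the claimed identity.

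Next I would substitute the prior of Eq.\eqref{equation:augmented prior with GMM}. Its logarithm separates additively into $\log \NormalPDF{\vecZero}{\firstZ - \secondZ}{2 \augVar}$ and $\log \sum_{c} \gmmCoef{c}\,\NormalPDF{\frac{\firstZ + \secondZ}{2}}{\gmmMean{c}}{\gmmVar{c} + \frac{1}{2}\augVar}$. The expectation of the first term under $q_{\paramEnc}(\pairZ | \pairX)$ is exactly the quantity already evaluated in Eq.\eqref{eq: component eq1 of Dkl} via Lemmas~\ref{lemma: quadOfMeanCov} and~\ref{proposition: CrossEntropy}, so I would quote it verbatim: it supplies the quadratic contribution $-\frac{1}{4}\bigl\{ \Tr{(\augVar^{-1}(\varEnc{\firstX} + \varEnc{\secondX}))} + \quadForm{\augVar^{-1}}{(\meanEnc{\firstX} - \meanEnc{\secondX})} \bigr\}$ together with the constant $-\frac{1}{2}\bigl(\log\determinant{\augVar} + d\log 4\pi\bigr)$. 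The expectation of the second term is precisely the mixture expectation appearing in the statement. Crucially, because the logarithm of a weighted sum of Gaussians does not factor, this piece admits no further closed-form reduction and must be left as an expectation under $q_{\paramEnc}(\pairZ | \pairX)$; this is the single conceptual departure from Theorem~\ref{thm: proposed variational bound}, where the second factor was one Gaussian and could therefore be integrated out by Lemma~\ref{proposition: CrossEntropy}.

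For the entropy I would invoke the assumed independence $q_{\paramEnc}(\pairZ | \pairX) = \NormalPDF{\firstZ}{\meanEnc{\firstX}}{\varEnc{\firstX}}\cdot\NormalPDF{\secondZ}{\meanEnc{\secondX}}{\varEnc{\secondX}}$ to write $H\!\left(q_{\paramEnc}(\pairZ | \pairX)\right)$ as the sum of two Gaussian entropies, each evaluated by Lemma~\ref{proposition: CrossEntropy} as $\frac{1}{2}\{d + d\log 2\pi + \log\determinant{\cdot}\}$. This contributes the determinant term $\frac{1}{2}\bigl(\log\determinant{\varEnc{\firstX}} + \log\determinant{\varEnc{\secondX}}\bigr)$ plus the constant $d + d\log 2\pi$.

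The final step is pure bookkeeping: collecting the $\log\determinant{\augVar}$ piece and the constants. Combining $-\frac{1}{2}\bigl(\log\determinant{\augVar} + d\log 4\pi\bigr)$ with $d + d\log 2\pi$ and using $\log 4\pi = 2\log 2 + \log\pi$ cancels the base-$2$ logarithms, leaving exactly $d + \frac{d}{2}\log\pi - \frac{1}{2}\log\determinant{\augVar}$, which matches the stated constant. I do not anticipate any genuine obstacle; the only point demanding care is tracking the $\log 2\pi$, $\log 4\pi$, and bare-$d$ terms so that the $\log 2$ contributions cancel precisely, and recognizing that the mixture term simply cannot be simplified further and so remains inside the expectation.
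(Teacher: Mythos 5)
Your proposal is correct and follows essentially the same route as the paper's own proof: the paper likewise decomposes the KL term as $\Exp{q_{\paramEnc}(\pairZ|\pairX)}{\log p_{\paramGMM}(\pairZ)} + H\left(q_{\paramEnc}(\pairZ|\pairX)\right)$, evaluates the Gaussian-difference factor of the Proposition~\ref{prop: prior latent joint as closed form based on GMM} prior via Lemmas~\ref{lemma: quadOfMeanCov} and~\ref{proposition: CrossEntropy} (the same computation as Eq.\eqref{eq: component eq1 of Dkl}), and leaves the log-mixture term as an unintegrated expectation, exactly as you argue it must be. Your constant bookkeeping yielding $d + \frac{d}{2}\log\pi - \frac{1}{2}\log\determinant{\augVar}$ also checks out against the stated result.
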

\begin{proof}
    From Lemma~\ref{lemma: quadOfMeanCov} and~\ref{proposition: CrossEntropy},
    \begin{align*}
        &\Exp{\firstZ \sim \Normal{\meanEnc{\firstX}}{\varEnc{\firstX}}}{\log \NormalPDF{\vecZero}{\firstZ - \secondZ}{2\augVar}}\\
        &= -\frac{1}{4} \left(\Tr{\left( \augVar^{-1} \varEnc{\firstX}\right)} + \quadForm{\augVar^{-1}}{(\meanEnc{\firstX} - \secondZ)} \right) - \frac{1}{2} \log \determinant{4\pi \augVar}.
    \end{align*}
    Using the above equation,
    \begin{align*}
        &\Exp{\pairZ \sim q_{\paramEnc}(\pairZ | \pairX)}{\log \NormalPDF{\vecZero}{\firstZ - \secondZ}{2\augVar}}\\
            &= -\frac{1}{4} \Bigl\{ \Tr{(\augVar^{-1}\varEnc{\firstX})} + \Exp{\secondZ \sim \Normal{\meanEnc{\secondX}}{\varEnc{\secondX}}}{ \quadForm{\augVar^{-1}}{(\meanEnc{\firstX} - \secondZ)}} \Bigr\} -\frac{1}{2} \log \determinant{4 \pi \augVar} \\
            &= - \frac{1}{4} \Bigl\{\Tr{(\augVar^{-1}\varEnc{\firstX})} + \Tr{(\augVar^{-1}\varEnc{\secondX})} + \quadForm{\augVar^{-1}}{(\meanEnc{\firstX} - \meanEnc{\secondX})} \Bigr\} \\
            &\quad\quad\quad\quad\quad\quad\quad\quad\quad- \frac{1}{2}\left(\log \determinant{\augVar} + d \log 4 \pi \right)\\
            &= -\frac{1}{4}\Tr{\left(\augVar^{-1}(\varEnc{\firstX}\!+\!\varEnc{\secondX})\!+\! \quadForm{\augVar}{(\meanEnc{\firstX}\!-\!\meanEnc{\secondX})} \right)}  - \frac{1}{2}\left(\log \determinant{\augVar} + d \log 4 \pi \right),
    \end{align*}
    and
    \begin{align*}
        &\Exp{q_{\paramEnc}(\pairZ|\pairX)}{\log p_{\paramGMM}(\pairZ) } \\
        &=\mathbb{E}_{q_{\paramEnc}(\pairZ|\pairX)}\Biggl[
        \log \sum_{c=1}^{C} \gmmCoef{c} \, \NormalPDF{\frac{\firstZ + \secondZ}{2}}{\gmmMean{c}}{\gmmVar{c} + \frac{1}{2} \augVar} +
        \log \NormalPDF{\vecZero}{\firstZ - \secondZ}{2 \augVar}
        \Biggr]\\
            % &= \Exp{q_{\paramEnc}(\pairZ|\pairX)}{ \log \sum_{c=1}^{C} \gmmCoef{c} \, \NormalPDF{\frac{\firstZ + \secondZ}{2}}{\gmmMean{c}}{\gmmVar{c} + \frac{1}{2} \augVar} + \log \NormalPDF{\vecZero}{\firstZ - \secondZ}{2 \augVar}}\\
            &= \Exp{q_{\paramEnc}(\pairZ|\pairX)}{\log \sum_{c=1}^{C} \gmmCoef{c} \, \NormalPDF{\frac{\firstZ + \secondZ}{2}}{\gmmMean{c}}{ \gmmVar{c} + \frac{1}{2} \augVar }}  - \frac{1}{2}(\log \determinant{\augVar} + d \log 4\pi)\\
            &\hspace{20pt} - \frac{1}{4} \Bigl\{ \Tr{\left(\augVar^{-1}(\varEnc{\firstX} + \varEnc{\secondX}) \right)} + 
            \quadForm{\augVar^{-1}}{(\meanEnc{\firstX} - \meanEnc{\secondX})} \Bigr\}.
    \end{align*}
    Therefore,
    \begin{equation*}
        \begin{aligned}
            \mathcal{L}\left(\Bar{\firstX}; \bm{\phi}, \bm{\theta}\right)
                &= \Exp{q_{\paramEnc}(\pairZ | \pairX)}{\log p_{\paramDec}(\pairX | \pairZ)} - \KLDiv{q_{\paramEnc}(\pairZ | \pairX) }{p_{\paramGMM}(\pairZ)}\\
                &= \Exp{q_{\paramEnc}(\pairZ | \pairX)}{\log p_{\paramDec}(\pairX | \pairZ)} + \Exp{q_{\paramEnc}(\pairZ | \pairX)}{\log p_{\paramGMM}(\pairZ)} + H\left(q_{\paramEnc}(\pairZ | \pairX) \right) \\
                 &= \Exp{q_{\paramEnc}(\pairZ | \pairX)}{\log p_{\paramDec}(\pairX | \pairZ)} + \Exp{q_{\paramEnc}(\pairZ | \pairX)}{\log \sum_{c = 1}^{C} \gmmCoef{c} \: \NormalPDF{\frac{\firstZ + \secondZ}{2}}{\gmmMean{c}}{\gmmVar{c} + \frac{1}{2}\augVar} } \\
                 &\hspace{20pt} - \frac{1}{4} \Bigl\{ \Tr{\left(\augVar^{-1}(\varEnc{\firstX} + \varEnc{\secondX})\right)} + \quadForm{\augVar^{-1}}{(\meanEnc{\firstX} - \meanEnc{\secondX})} \Bigr\}\\
                 &\hspace{30pt} + \frac{1}{2} \left(\log \determinant{\varEnc{\firstX} } + \log \determinant{\varEnc{\secondX}} \right) + d\left(1 + \frac{1}{2} \log \pi\right) - \frac{1}{2} \log \determinant{\augVar}.
        \end{aligned}
    \end{equation*}
\end{proof}

\section{Further Details of Numerical Experiments}
\label{append: further details of numerical experiments}
Let $\originalX^\text{tr}_i$ and $\originalX^\text{ts}_j$ denote $i$-th original data in training dataset and $j$-th original data in test dataset, respectively. In addition, let $\mathcal{D}^{\rm tr} = \{\originalX^\text{tr}_i\}_{i=1}^n$ and $\mathcal{D}^{\rm ts} = \{\originalX^\text{ts}_j\}_{j=1}^m$, where $n$ (resp. $m$) denotes the number of training (resp. test) data points.

\subsection{Preliminary Experiments}
\label{apped: details preliminary experiments}
Using the original MNIST dataset~\cite{lecun1998gradient},
we firstly evaluated the performance of the following two trained model:
\begin{description}
    \item[(A)] a VAE trained by $\mathcal{D}^{\rm tr}$,
    \item[(B)] a VAE trained by $\{\originalX^\text{tr}_i\}_{i=1}^n \cup \{\firstX^\text{tr}_i\}_{i=1}^n$, where $\firstX^\text{tr}_i=\originalX^\text{tr}_i + \bm{\epsilon}_i$, and $\bm{\epsilon}_i$ is $i$-th random Gaussian noise.
\end{description}
The dimension of the latent space was ten for both VAEs, and both architectures (encoder and decoder) were the exactly same as described at the beginning of Section~\ref{subsubsec: Experiment protocols}.   
Additionally, $(n,m)=(60000,10000)$ with the original MNIST. 
Let 
$\textrm{Enc}^{\rm (A)}_{\bm{\phi}^\ast}$ (resp. $\textrm{Enc}^{\rm (B)}_{\bm{\phi}^\ast}$) denote the trained encoder of (A) (resp. (B)).

As shown in a) and b) of~Figure~\ref{fig: expr}, the accuracies of (A) and (B) were almost the same. 
We then investigated the following two sets of the latent variables: $\mathcal{Z}_{\rm (A)} = \left\{\originalZ^{\rm ts}_{j, {\rm (A)}}\right\}_{j=1}^m \cup \left\{\firstZ^{\rm ts}_{j, {\rm (A)}}\right\}_{j=1}^m$ and $\mathcal{Z}_{\rm (B)} = \left\{\originalZ^{\rm ts}_{j, {\rm (B)}}\right\}_{j=1}^m \cup \left\{\firstZ^{\rm ts}_{j, {\rm (B)}}\right\}_{j=1}^m$, where 
$\originalZ^{\rm ts}_{j, {\rm (\cdot)}} := \textrm{Enc}^{\rm (\cdot)}_{\bm{\phi}^\ast}\left(\originalX^\text{ts}_j\right)$ and $\firstZ^{\rm ts}_{j,{\rm (\cdot)}} := \textrm{Enc}^{\rm (\cdot)}_{\bm{\phi}^\ast}\left(\originalX^\text{ts}_j + \bm{\epsilon}_j \right)$. In Figure~\ref{fig: visualization of latent via vae with noisy data}, the t-SNE visualization of $\mathcal{Z}_{\rm (A)}$ and $\mathcal{Z}_{\rm (B)}$ are shown in (A) and (B), respectively.
Consider the set 
$S := \left\{ \left\|\originalZ^{\rm ts}_{j} - \firstZ^{\rm ts}_{j}\right\|_2^2:j=1,...,m\right\}$ on $\mathcal{Z}$. 
Then, the average and std over $S$ for (A) and (B) are shown in the first and second column in Table~\ref{tab: verification of our hypothesis}.

\subsection{Hyper-Parameter Tuning and Model Architecture}
\label{append:details hyper prameter tuning}
%\textbf{Hyper-Parameter Tuning:} 
In Section~\ref{sec:numerical experiments}, we fixed the dimension of the latent space to $10$ for all methods. In addition, the hyper-parameters in all the existing methods were sufficiently tuned in our preliminary experiments using both MNIST and F-MNIST. 
For our method, Raven, the primary hyperparameter $\augVar$  is set to be $\sigma^2 I $. We explored a range for $\sigma \in \{0.1, 0.2, 0.4, 0.1, 0.2, 0.4, 1\}$ and found the best performing $\sigma=0.04$ for F-MNIST and $\sigma=0.01$ for MNIST, based on the metrics described at the end of Section~\ref{subsubsec: Experiment protocols}.

In our experiments, we used a fully connected feed-forward neural network with layers arranged as $784-500-250-10$ (encoder), $10-250-500-784$ (decoder) with PReLU activation functions~\cite{he2015delving}.

For SE, the key hyperparameter is $\eta$, which controls the coupling strength for the regularized Wasserstein distance between the latent representations of the original data points and adversarial data points. From 
$\eta \in \{ 50,100,250,500,1000,2500,5000,10000\}$, we found $\eta = 500$ performs the best for MNIST and $\eta=5000$ for F-MNIST.

\bibliographystyle{IEEEtran}
\bibliography{allref_camera-ready}

% Generated by IEEEtran.bst, version: 1.14 (2015/08/26)
\begin{thebibliography}{10}
\providecommand{\url}[1]{#1}
\csname url@samestyle\endcsname
\providecommand{\newblock}{\relax}
\providecommand{\bibinfo}[2]{#2}
\providecommand{\BIBentrySTDinterwordspacing}{\spaceskip=0pt\relax}
\providecommand{\BIBentryALTinterwordstretchfactor}{4}
\providecommand{\BIBentryALTinterwordspacing}{\spaceskip=\fontdimen2\font plus
\BIBentryALTinterwordstretchfactor\fontdimen3\font minus
  \fontdimen4\font\relax}
\providecommand{\BIBforeignlanguage}[2]{{%
\expandafter\ifx\csname l@#1\endcsname\relax
\typeout{** WARNING: IEEEtran.bst: No hyphenation pattern has been}%
\typeout{** loaded for the language `#1'. Using the pattern for}%
\typeout{** the default language instead.}%
\else
\language=\csname l@#1\endcsname
\fi
#2}}
\providecommand{\BIBdecl}{\relax}
\BIBdecl

\bibitem{DBLP:journals/corr/KingmaW13}
\BIBentryALTinterwordspacing
D.~P. Kingma and M.~Welling, ``Auto-encoding variational bayes,'' in
  \emph{International Conference on Learning Representations}, 2014. [Online].
  Available: \url{http://arxiv.org/abs/1312.6114}
\BIBentrySTDinterwordspacing

\bibitem{NIPS2014_5ca3e9b1}
\BIBentryALTinterwordspacing
I.~Goodfellow, J.~Pouget-Abadie, M.~Mirza, B.~Xu, D.~Warde-Farley, S.~Ozair,
  A.~Courville, and Y.~Bengio, ``Generative adversarial nets,'' in
  \emph{Advances in Neural Information Processing Systems}, Z.~Ghahramani,
  M.~Welling, C.~Cortes, N.~Lawrence, and K.~Weinberger, Eds., vol.~27.\hskip
  1em plus 0.5em minus 0.4em\relax Curran Associates, Inc., 2014. [Online].
  Available:
  \url{https://proceedings.neurips.cc/paper_files/paper/2014/file/5ca3e9b122f61f8f06494c97b1afccf3-Paper.pdf}
\BIBentrySTDinterwordspacing

\bibitem{pmlr-v37-rezende15}
\BIBentryALTinterwordspacing
D.~Rezende and S.~Mohamed, ``Variational inference with normalizing flows,'' in
  \emph{Proceedings of the 32nd International Conference on Machine Learning},
  ser. Proceedings of Machine Learning Research, F.~Bach and D.~Blei, Eds.,
  vol.~37.\hskip 1em plus 0.5em minus 0.4em\relax Lille, France: PMLR, 07--09
  Jul 2015, pp. 1530--1538. [Online]. Available:
  \url{https://proceedings.mlr.press/v37/rezende15.html}
\BIBentrySTDinterwordspacing

\bibitem{DBLP:conf/iclr/0011SKKEP21}
\BIBentryALTinterwordspacing
Y.~Song, J.~Sohl-Dickstein, D.~P. Kingma, A.~Kumar, S.~Ermon, and B.~Poole,
  ``Score-based generative modeling through stochastic differential
  equations,'' in \emph{International Conference on Learning Representations},
  2021. [Online]. Available: \url{https://openreview.net/forum?id=PxTIG12RRHS}
\BIBentrySTDinterwordspacing

\bibitem{pmlr-v48-oord16}
\BIBentryALTinterwordspacing
A.~van~den Oord, N.~Kalchbrenner, and K.~Kavukcuoglu, ``Pixel recurrent neural
  networks,'' in \emph{Proceedings of The 33rd International Conference on
  Machine Learning}, ser. Proceedings of Machine Learning Research, M.~F.
  Balcan and K.~Q. Weinberger, Eds., vol.~48.\hskip 1em plus 0.5em minus
  0.4em\relax New York, New York, USA: PMLR, 20--22 Jun 2016, pp. 1747--1756.
  [Online]. Available: \url{https://proceedings.mlr.press/v48/oord16.html}
\BIBentrySTDinterwordspacing

\bibitem{touvron2023llama}
H.~Touvron, T.~Lavril, G.~Izacard, X.~Martinet, M.-A. Lachaux, T.~Lacroix,
  B.~Rozi^^c3^^a8re, N.~Goyal, E.~Hambro, F.~Azhar, A.~Rodriguez, A.~Joulin,
  E.~Grave, and G.~Lample, ``Llama: Open and efficient foundation language
  models,'' in \emph{arXiv}, 2023.

\bibitem{singer2023makeavideo}
\BIBentryALTinterwordspacing
U.~Singer, A.~Polyak, T.~Hayes, X.~Yin, J.~An, S.~Zhang, Q.~Hu, H.~Yang,
  O.~Ashual, O.~Gafni, D.~Parikh, S.~Gupta, and Y.~Taigman, ``Make-a-video:
  Text-to-video generation without text-video data,'' in \emph{The Eleventh
  International Conference on Learning Representations}, 2023. [Online].
  Available: \url{https://openreview.net/forum?id=nJfylDvgzlq}
\BIBentrySTDinterwordspacing

\bibitem{zhong2021cryodrgn}
E.~D. Zhong, T.~Bepler, B.~Berger, and J.~H. Davis, ``Cryo{DRGN}:
  reconstruction of heterogeneous cryo-{EM} structures using neural networks,''
  \emph{Nature Methods}, vol.~18, no.~2, pp. 176--185, 2021.

\bibitem{Cemgil2020Adversarially}
\BIBentryALTinterwordspacing
T.~Cemgil, S.~Ghaisas, K.~D. Dvijotham, and P.~Kohli, ``Adversarially robust
  representations with smooth encoders,'' in \emph{International Conference on
  Learning Representations}, 2020. [Online]. Available:
  \url{https://openreview.net/forum?id=H1gfFaEYDS}
\BIBentrySTDinterwordspacing

\bibitem{NEURIPS2022_39e9c591}
\BIBentryALTinterwordspacing
A.~Kuzina, M.~Welling, and J.~Tomczak, ``Alleviating adversarial attacks on
  variational autoencoders with mcmc,'' in \emph{Advances in Neural Information
  Processing Systems}, S.~Koyejo, S.~Mohamed, A.~Agarwal, D.~Belgrave, K.~Cho,
  and A.~Oh, Eds., vol.~35.\hskip 1em plus 0.5em minus 0.4em\relax Curran
  Associates, Inc., 2022, pp. 8811--8823. [Online]. Available:
  \url{https://proceedings.neurips.cc/paper_files/paper/2022/file/39e9c5913c970e3e49c2df629daff636-Paper-Conference.pdf}
\BIBentrySTDinterwordspacing

\bibitem{pmlr-v108-eduardo20a}
\BIBentryALTinterwordspacing
S.~Eduardo, A.~Nazabal, C.~K.~I. Williams, and C.~Sutton, ``Robust variational
  autoencoders for outlier detection and repair of mixed-type data,'' in
  \emph{Proceedings of the Twenty Third International Conference on Artificial
  Intelligence and Statistics}, ser. Proceedings of Machine Learning Research,
  S.~Chiappa and R.~Calandra, Eds., vol. 108.\hskip 1em plus 0.5em minus
  0.4em\relax PMLR, 26--28 Aug 2020, pp. 4056--4066. [Online]. Available:
  \url{https://proceedings.mlr.press/v108/eduardo20a.html}
\BIBentrySTDinterwordspacing

\bibitem{willetts2021improving}
\BIBentryALTinterwordspacing
M.~J. Willetts, A.~Camuto, T.~Rainforth, S.~Roberts, and C.~C. Holmes,
  ``Improving {VAE}s' robustness to adversarial attack,'' in
  \emph{International Conference on Learning Representations}, 2021. [Online].
  Available: \url{https://openreview.net/forum?id=-Hs_otp2RB}
\BIBentrySTDinterwordspacing

\bibitem{NEURIPS2020_ac10ff19}
\BIBentryALTinterwordspacing
T.~Cemgil, S.~Ghaisas, K.~Dvijotham, S.~Gowal, and P.~Kohli, ``The autoencoding
  variational autoencoder,'' in \emph{Advances in Neural Information Processing
  Systems}, H.~Larochelle, M.~Ranzato, R.~Hadsell, M.~Balcan, and H.~Lin, Eds.,
  vol.~33.\hskip 1em plus 0.5em minus 0.4em\relax Curran Associates, Inc.,
  2020, pp. 15\,077--15\,087. [Online]. Available:
  \url{https://proceedings.neurips.cc/paper_files/paper/2020/file/ac10ff1941c540cd87c107330996f4f6-Paper.pdf}
\BIBentrySTDinterwordspacing

\bibitem{NEURIPS2019_335cd1b9}
\BIBentryALTinterwordspacing
B.~Li, C.~Chen, W.~Wang, and L.~Carin, ``Certified adversarial robustness with
  additive noise,'' in \emph{Advances in Neural Information Processing
  Systems}, H.~Wallach, H.~Larochelle, A.~Beygelzimer, F.~d\textquotesingle
  Alch\'{e}-Buc, E.~Fox, and R.~Garnett, Eds., vol.~32.\hskip 1em plus 0.5em
  minus 0.4em\relax Curran Associates, Inc., 2019. [Online]. Available:
  \url{https://proceedings.neurips.cc/paper_files/paper/2019/file/335cd1b90bfa4ee70b39d08a4ae0cf2d-Paper.pdf}
\BIBentrySTDinterwordspacing

\bibitem{lecun1998gradient}
Y.~Lecun, L.~Bottou, Y.~Bengio, and P.~Haffner, ``Gradient-based learning
  applied to document recognition,'' \emph{Proceedings of the IEEE}, vol.~86,
  no.~11, pp. 2278--2324, 1998.

\bibitem{vanDerMaaten2008}
\BIBentryALTinterwordspacing
L.~van~der Maaten and G.~Hinton, ``Visualizing data using t-sne,''
  \emph{Journal of Machine Learning Research}, vol.~9, no.~86, pp. 2579--2605,
  2008. [Online]. Available:
  \url{http://jmlr.org/papers/v9/vandermaaten08a.html}
\BIBentrySTDinterwordspacing

\bibitem{haochen2021provable}
\BIBentryALTinterwordspacing
J.~Z. HaoChen, C.~Wei, A.~Gaidon, and T.~Ma, ``Provable guarantees for
  self-supervised deep learning with spectral contrastive loss,'' in
  \emph{Advances in Neural Information Processing Systems}, M.~Ranzato,
  A.~Beygelzimer, Y.~Dauphin, P.~Liang, and J.~W. Vaughan, Eds., vol.~34.\hskip
  1em plus 0.5em minus 0.4em\relax Curran Associates, Inc., 2021, pp.
  5000--5011. [Online]. Available:
  \url{https://proceedings.neurips.cc/paper_files/paper/2021/file/27debb435021eb68b3965290b5e24c49-Paper.pdf}
\BIBentrySTDinterwordspacing

\bibitem{higgins2017betavae}
\BIBentryALTinterwordspacing
I.~Higgins, L.~Matthey, A.~Pal, C.~Burgess, X.~Glorot, M.~Botvinick,
  S.~Mohamed, and A.~Lerchner, ``beta-{VAE}: Learning basic visual concepts
  with a constrained variational framework,'' in \emph{International Conference
  on Learning Representations}, 2017. [Online]. Available:
  \url{https://openreview.net/forum?id=Sy2fzU9gl}
\BIBentrySTDinterwordspacing

\bibitem{ijcai2017p273}
\BIBentryALTinterwordspacing
Z.~Jiang, Y.~Zheng, H.~Tan, B.~Tang, and H.~Zhou, ``Variational deep embedding:
  An unsupervised and generative approach to clustering,'' in \emph{Proceedings
  of the Twenty-Sixth International Joint Conference on Artificial
  Intelligence, {IJCAI-17}}, 2017, pp. 1965--1972. [Online]. Available:
  \url{https://doi.org/10.24963/ijcai.2017/273}
\BIBentrySTDinterwordspacing

\bibitem{mathai1992quadratic}
A.~M. Mathai and S.~B. Provost, \emph{Quadratic Forms in Random
  Variables}.\hskip 1em plus 0.5em minus 0.4em\relax CRC Press, 1992.

\bibitem{xiao2017fashion}
H.~Xiao, K.~Rasul, and R.~Vollgraf, ``Fashion-mnist: a novel image dataset for
  benchmarking machine learning algorithms,'' in \emph{arXiv}, 2017.

\bibitem{43405}
\BIBentryALTinterwordspacing
I.~Goodfellow, J.~Shlens, and C.~Szegedy, ``Explaining and harnessing
  adversarial examples,'' in \emph{International Conference on Learning
  Representations}, 2015. [Online]. Available:
  \url{http://arxiv.org/abs/1412.6572}
\BIBentrySTDinterwordspacing

\bibitem{he2015delving}
K.~He, X.~Zhang, S.~Ren, and J.~Sun, ``Delving deep into rectifiers: Surpassing
  human-level performance on imagenet classification,'' in \emph{2015 IEEE
  International Conference on Computer Vision (ICCV)}, 2015, pp. 1026--1034.

\bibitem{Liu2020On}
\BIBentryALTinterwordspacing
L.~Liu, H.~Jiang, P.~He, W.~Chen, X.~Liu, J.~Gao, and J.~Han, ``On the variance
  of the adaptive learning rate and beyond,'' in \emph{International Conference
  on Learning Representations}, 2020. [Online]. Available:
  \url{https://openreview.net/forum?id=rkgz2aEKDr}
\BIBentrySTDinterwordspacing

\bibitem{heusel2017gans}
\BIBentryALTinterwordspacing
M.~Heusel, H.~Ramsauer, T.~Unterthiner, B.~Nessler, and S.~Hochreiter, ``Gans
  trained by a two time-scale update rule converge to a local nash
  equilibrium,'' in \emph{Advances in Neural Information Processing Systems},
  I.~Guyon, U.~V. Luxburg, S.~Bengio, H.~Wallach, R.~Fergus, S.~Vishwanathan,
  and R.~Garnett, Eds., vol.~30.\hskip 1em plus 0.5em minus 0.4em\relax Curran
  Associates, Inc., 2017. [Online]. Available:
  \url{https://proceedings.neurips.cc/paper_files/paper/2017/file/8a1d694707eb0fefe65871369074926d-Paper.pdf}
\BIBentrySTDinterwordspacing

\bibitem{schott2016matrix}
J.~R. Schott, \emph{Matrix analysis for statistics}, 2nd~ed.\hskip 1em plus
  0.5em minus 0.4em\relax John Wiley \& Sons, 2005.

\end{thebibliography}

\end{document}